\numberwithin{equation}{section}
\DeclareMathOperator*{\argmax}{arg\,max}
\DeclareMathOperator*{\mL}{\mathcal L}
\DeclareMathOperator*{\mS}{\mathcal S}
\def \endprf{\hfill {\vrule height6pt width6pt depth0pt}\medskip}
\newcommand{\BlackBox}{\rule{1.5ex}{1.5ex}}  
\newenvironment{proof}{\par\noindent{\bf Proof\ }}{\hfill\BlackBox\\[2mm]}
\newtheorem{theorem}{Theorem}
\newtheorem{lemma}[theorem]{Lemma}
\title{HierLPR: Decision making in hierarchical multi-label classification with local precision rates}
\author{Christine Ho\thanks{Equally contributed.}\\
       Department of Statistics\\
       University of California\\
       Berkeley, CA, USA\\
       \texttt{christineho@berkeley.edu}\\
       \And
       Yuting Ye\footnotemark[1]\\
       Division of Biostatistics\\
       University of California\\
       Berkeley, CA, USA\\
       \texttt{yeyt@berkeley.edu} \\
       \And
       Ci-Ren Jiang\\       
       Institute of Statistical Science\\
       Academia Sinica\\
       Taipei, Taiwan\\
       \texttt{cirenjiang@stat.sinica.edu.tw} \\
       \And
       Wayne Tai Lee \\
       Department of Statistics\\
       University of California, Berkeley\\
       Berkeley, CA, USA\\
       \texttt{lwtai@stat.berkeley.edu}\\
       \And
       Haiyan Huang\thanks{Corresponding author.}\\
       Department of Statistics\\
       University of California\\
       Berkeley, CA, USA\\
       \texttt{hhuang@stat.berkeley.edu}\\
     }
\begin{document}
\maketitle

\begin{abstract}
  In this article we propose a novel ranking algorithm, referred to as HierLPR, for the multi-label classification problem when the candidate labels follow a known hierarchical structure. HierLPR is motivated by a new metric called eAUC that we design to assess the ranking of classification decisions. This metric, associated with the hit curve and local precision rate, emphasizes the accuracy of the first calls. We show that HierLPR optimizes eAUC under the tree constraint and some light assumptions on the dependency between the nodes in the hierarchy. We also provide a strategy to make calls for each node based on the ordering produced by HierLPR, with the intent of controlling FDR or maximizing F-score. The performance of our proposed methods is demonstrated on synthetic datasets as well as a real example of disease diagnosis using NCBI GEO datasets. In these cases, HierLPR shows a favorable result over competing methods in the early part of the precision-recall curve.
\end{abstract}

\keywords{multi-label classification, local precision rate (LPR), hierarchy, hit curve, HierLPR, eAUC}

\section{Introduction}\label{sec:hmc_intro}
Hierarchical multilabel classification (HMC) refers to the classification problem in which instances are assigned labels of multiple classes, and these classes follow one or more paths along a tree or directed acyclic graph (DAG) structure (Figure \ref{fig:notation_graph}). Typically, the predictions are not required to reach the leaf-node level. In computational biology and medicine, a typical application of HMC is the diagnosis of disease using genomics data. The disease labels are often derived from the Unified Medical Language System (UMLS), a well-known biomedical vocabulary organized as a directed acyclic graph \citep{huang2010}. Other common applications of HMC include the assignments of genes to categories from the Gene Ontology DAG, and the categorization of proteins along the MIPS FunCat rooted tree \citep{alves2010, barutcuoglu2006, blockeel2006decision, clare2003, kiritchenko2005, valentini2009, valentini2011}. Outside of computational biology, HMC is used for problems like the classification of text documents, music categorization, and image recognition, all of which tend to have labels that follow hierarchical structures \citep{rousu2006kernel, kiritchenko2006learning, mayne2009}.

Solutions to the HMC problem fall into three categories: flat, local, and global classification \citep{silla2011survey}. Flat classifiers generate predictions for the leaf nodes and then propagate any positive calls at the leaf-level to the their ancestors in order to respect the hierarchy \citep{barbedo2007automatic, silla2011survey}. One downside of this type of methods is that it does not take advantage of information from non-leaf level classes. Local classification uses a two-stage process: first, initial predictions over the nodes in the graph are generated from multiple classifiers, where each classifier has been trained separately for each node or a group of nodes in the graph; then, in the second stage, an adjustment is made to the initial predictions to generate final decisions that respect the hierarchical relationships among the classes. Local classifiers enjoy tremendous popularity because they are flexible and efficient, as demonstrated in \cite{koller1997, wu2005,holden2005hybrid,silla2009novel,gauch2009training}, to name a few. An open question in local classification is how to perform this second stage adjustment: it is common for second-stage adjustments to suffer from a blocking problem, where the first-stage decisions at the top of the hierarchy essentially carry too much weight. It is challenging to create rules that will allow strong positive signals from lower down in the hierarchy to override false negatives at the top. There are several tentative proposals, but they either lack theoretical support \citep{sun2001}, or do not scale well with the size of the graph and can incur numerical underflow issues \citep{barutcuoglu2006}. Unlike local classifiers, global classifiers simultaneously make predictions for the graph rather than on a node by node basis. \cite{vens2008} argued that global classification methods have the potential to perform better than local classification methods because they demand fewer decision rules overall. However, in practice, most global classifiers suffer from scalability issues and are often more computationally demanding because they require optimizing over the entire space of possible decisions. The current state-of-the-art method of HMC is a global classifier called clusHMC and its variants \citep{blockeel2002, blockeel2006decision, vens2008}, which are based on predictive clustering trees (PCTs). These techniques adapt decision tree learners to the HMC case for both tree- and DAG-structured hierarchies.

In this article, we propose a ranking strategy called \textit{Hierarchical multi-label classification with Local Precision Rates} (HierLPR), which can be used at the second stage of a local classification approach --- it produces a hierarchically consistent ranking of the instances based on their first-stage prediction scores. HierLPR is motivated by a novel metric called early-call-driven area under curve (eAUC) that we design to assess the ranking of classification decisions. This metric is closely related to the area under the hit curve and local precision rate (LPR) \citep{jiang2014}, and assigns more importance to the decisions made in the early stage than those made later. The optimization of eAUC under the hierarchy constraint forces the strong positive nodes from lower down to help sort those seated at the top of hierarchy, which alleviates the blocking issue for HierLPR to some extent. We evaluate the performance of HierLPR by comparing it with a current state-of-the-art method for global classification, ClusHMC. In our simulation studies, HierLPR has comparable performance to ClusHMC. When applied to the Gene Expression Omnibus (GEO) and the UMLS datasets for disease diagnosis \citep{huang2010}, HierLPR outperforms ClusHMC. Finally, in addition to providing a sorted list produced by HierLPR, we provide a strategy for selecting a cutoff on the ranking so that the instances ahead of the cutoff are taken as positive and others are negative. As the simulation study shows in Section \ref{sec:cutoff_selection}, we can control the precision rate or optimize F-measure by using the selected cutoff.

The rest of the paper is organized as follows. In Section \ref{sec:notation_concept}, we introduce the notation used throughout the paper, and review the concepts of LPR and hit curve. In Section \ref{sec:methods}, we define the novel metric eAUC to evaluate the classification performance, which originates from the hit curve. Next, we introduce and analyze HierLPR for performing the second-stage decision adjustment of a local classifier. Faster implementations of HierLPR are discussed in Section \ref{sec:faster_version}. We assess the performance of HierLPR on synthetic data and the disease diagnosis data used in \cite{huang2010} in Section \ref{sec:simulation} and \ref{sec:real_data} respectively. In Section \ref{sec:cutoff_selection} we introduce the strategy for cutoff selection and demonstrate its power on the synthetic data. Finally, we conclude this article in Section \ref{sec:hmc_conclusion}.

\section{Notation and related concepts} \label{sec:notation_concept}
In this section, we formulate the question of our interest and introduce the notation used throughout this article. The notation is based on the tree structure, the focus of this article. Then, we briefly review the concepts of LPR and the hit curve, both of which are key concepts of our procedure. \\

\subsection{Notation} \label{sec:notation}
\begin{figure}[h]
  \centering
  \includegraphics[width=0.6\textwidth]{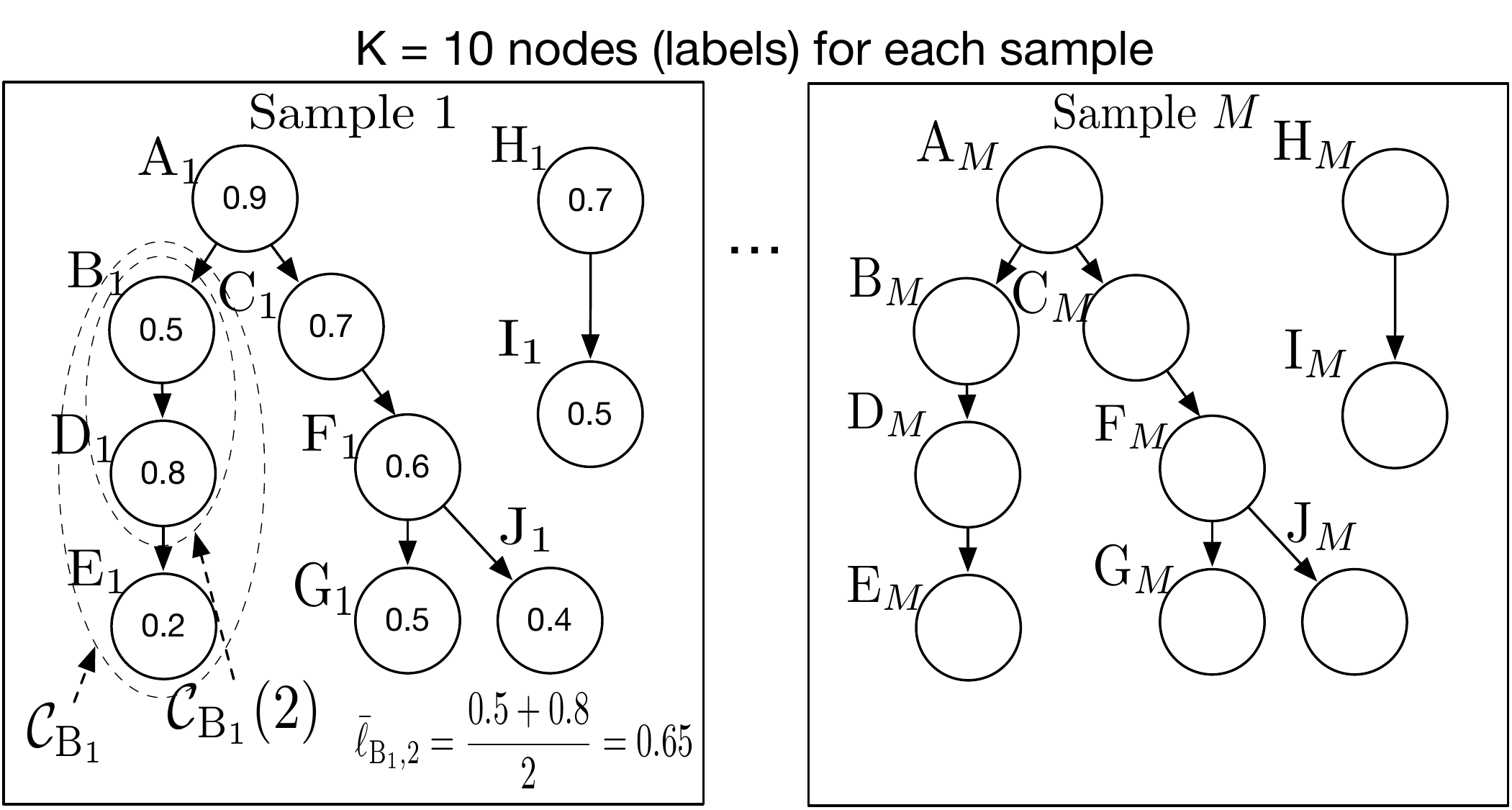}
  \caption{Graphical illustration of the notation. Each sample has $10$ candidate labels (nodes), and these labels follow a forest hierarchy. As shown in Sample 1, the numbers inside the nodes are the associated node LPRs (scores) for the this sample. $\mathcal{C}_{\text{B}_1}$ (the chain in the outer dash ellipse) is a sub-chain consisting of the three nodes starting from node $\text{B}_1$. $\mathcal{C}_{\text{B}_1}(2)$ (the chain in the inner dash ellipse) is a sub-chain consisting of the first two nodes of $\mathcal{C}_{\text{B}_1}$, and $\bar{\ell}_{\text{B}_1, 2}$ is the averaging LPR (score) of these two nodes.}
  \label{fig:notation_graph}
\end{figure}

In this article, we differentiate the terms ``sample'' and ``instance'' --- a ``sample'' is used to represent an individual or an object while an ``instance'' particularly refers to a (sample, class) or (sample, label) pair (terms ``class'' and ``label'' are used alternatively). Each instance has a binary status, that is, 1 to 0 or positive to negative. An instance with a positive (negative) status indicates that the corresponding sample has (does not have) the associated label as the ground truth. Specifically, we use $Q_{k, m}$ as a binary indicator for whether the status of instance $(m, k)$ is positive, or whether sample $m$ truly has label $k$.

Assume there are $M$ samples and $K$ candidate labels organized in tree structures. In graph language, each of the $M$ samples is represented by a forest consisting of $K$ nodes, that is, each node is an instance as shown in Figure \ref{fig:notation_graph}. Denote by $\mS$ the set of all candidate nodes of the $M$ samples. Under the tree constraint, $\mS$ is a forest of size $n = KM$. Our purpose is to sort these nodes/instances to maximize some objective quantity (see Section \ref{sec:eAUC}) with the hierarchy constraint satisfied. We denote the list of the sorted nodes by $\mL$.

We define a chain $\mathcal{C}_{r \rightarrow e}$ as a directed path that starts from node $r$ and ends at node $e$ (the path is unique for a tree). For the sake of simplicity, we use $\mathcal{C}_r$ to represent $\mathcal{C}_{r \rightarrow e}$ if node $r$ has at most one descendant in each generation and node $e$ is a leaf node. We call $\mathcal{C}_r(i)$ a sub-chain that consists of the first $i$ nodes of $\mathcal{C}_r$. As illustrated in Sample 1 of Figure \ref{fig:notation_graph}, $\mathcal{C}_{\text{A}_1 \rightarrow \text{E}_1}$ represents the chain $\text{A}_1 \rightarrow \text{B}_1 \rightarrow \text{D}_1 \rightarrow \text{E}_1$, $\mathcal{C}_{\text{B}_1}$ represents the chain $\text{B}_1 \rightarrow \text{D}_1 \rightarrow \text{E}_1$, and $\mathcal{C}_{\text{B}_1}(2)$ is a sub-chain of $\mathcal{C}_{\text{B}_1}$ consisting of $\mathcal{C}_{\text{B}_1}$'s first two nodes, i.e., $\text{B}_1 \rightarrow \text{D}_1$. For a node $k$, we denote its LPR value as $LPR_k$. Given a chain $\mathcal{C}_r$, we define the average LPR of sub-chain $\mathcal{C}_r(i)$ as $\bar{\ell}_{r, i} = \frac{1}{|\mathcal{C}_r(i)|}\sum_{k \in \mathcal{C}_r(i)} LPR_k$. For a forest $\mS$, we denote
\begin{equation}\label{eq:def_P}
  \mathcal{P}:=
  \left \{
  \begin{array}{ll}
     v: & v~ \text{has multiple child nodes, while all of}\\
     & \text{the child nodes have at most one}\\
     & \text{descendant in each generation}.
  \end{array}
  \right \}
\end{equation}
 The subtree starting from any child of $v \in \mathcal{P}$ would be a single chain. For example, in Sample 1 of Figure \ref{fig:notation_graph}, only node $\text{F}_1$ belongs to $\mathcal{P}$. Node $\text{B}_1$ and node $\text{H}_1$ do not belong to $\mathcal{P}$ since they only have one child. Node $\text{A}_1$ and node $\text{C}_1$ do not belong to $\mathcal{P}$ because their common descendant node $\text{F}_1$ has two children. 

 \subsection{LPR and Hit Curve}\label{sec:lpr_hit_curve}
 \noindent \textbf{LPR}. Suppose there are pre-trained classifiers for the $K$ classes and the classification scores for the $M$ samples are $\{s_{k, m}|m = 1, \ldots M$, $k = 1\, \ldots, K\}$. A larger score implies a larger chance to be positive in the corresponding class. Let $x$ be a random object ($m$ is a realized sample of $x$) and $S_{k,x}$ be a classifier score (with cdf $F_k$) for $x$ against class $k$. Let $\lambda_{k, u_k}$ be a cutoff for scores from the classifier of class $k$; $u_k$ represents the desired chance that $x$ is not assigned to class $k$, or equivalently, $u_k = P(S_{k, x} \leq \lambda_{k, u_k}) = F_k(\lambda_{k, u_k})$. We define the LPR function for class k as 
\begin{equation}
\label{LPR_def}
LPR_k(u_k) = -\frac{d}{du_k}\{(1 - u_k)G_k(u_k)\}, 
\end{equation}
where $G_k(u_k) = P(Q_{k,x} = 1 | S_{k,x} > \lambda_{k, u_k})$. It is shown in \cite{jiang2014} that LPR can be directly compared across classes while the original classification scores are not generally comparable. Sorting LPRs in the decreasing order guarantee the the optimal pooled precision at any pooled recall rate if there is no hierarchy constraint. In addition, LPR was shown to be equivalent to the local true discovery rate, $\ell\text{tdr}$; see Appendix \ref{appendix:LPR_inference} for more details on LPR. \\

\noindent \textbf{Hit curve}.
In a hit curve, the x-axis represents the number of discoveries and the y-axis represents the number of true discoveries (i.e., the hit number). Hit curve has been explored in the information retrieval community as a useful alternative to a receiver operating characteristics (ROC) or precision-recall (PR) curve, particularly in situations where the users are more interested in the top-ranked instances. For example, in evaluating the performance of a web search engine, the relevance of the top-ranked pages is more important than those that appear lower in search results because users expect that the most relevant results appear first. The hit curve can serve well in this situation as a graphic representation of the ranker's performance, since it would plot the results in order of decreasing relevance and the y-axis would indicate the results' relevance to the target. On the other hand, the ROC curve, which plots the true positive rates (TPR) against the false positive rates (FPR) at various threshold settings,  does not depend on the prevalence of positive instances\citep{davis2006, hand2009}. In the case of search results, the number of relevant pages is tiny compared to the size of the World Wide Web (i.e., low prevalence of positive instances), which would result in an almost zero FPR for the top ranked pages. That is to say, with very few true positives, the early part of the ROC curve would fail to meaningfully visualize the search ranking performance. In the case of many hierarchical multilabel classification problems, like disease diagnosis problems, this issue exists as well; there are  many candidate diseases to consider while few are actually relevant to the patient. Although the PR curve accounts for prevalence to a degree (i.e., showing the tradeoff between precision and recall for different threshold), \cite{herskovic2007} provided a simple example where the hit curve can be the more informative choice: with only five positive cases out of 1000, the hit curve’s shape clearly highlighted the call order of a method that had called 100 instances before the five true positives, whereas the corresponding PR curve was uninformative (i.e., both the recall and precision rates are zero for the first 100 called instances).

\section{Methods} \label{sec:methods}
In this section, we motivate and derive the novel metric eAUC. Then, we specify the HierLPR ranking algorithm and discuss its theoretical properties.


\subsection{A novel evaluation metric for HMC: early-call-driven area under curve (eAUC)}\label{sec:eAUC}
Measures like ROC-AUC, precision, recall, and F-measure are widely used in classification problems without hierarchy constraint. However, consensus on standard metrics has not yet been achieved for evaluating methods for the HMC problem, and development of such metrics remains an active research topic today. For example, H-loss \citep{cesa2006incremental, rousu2006kernel, cesa2006hierarchical}, matching loss \citep{nowak2010performance}, hierarchical hamming loss and hierarchical ranking loss \citep{bi2015} were designed to blend the false positive rate (TPR) and the false negative rate (TNR), while considering the hierarchical constraint. However, all of these loss functions depend on the choice of per-node cost and other coefficients, which are usually determined with intuition or domain knowledge rather than with statistical justification. On the other hand, the hierarchical versions of precision, recall, and F-measure introduced in \cite{kiritchenko2005, verspoor2006categorization} do not require determination of the per-node costs, but have complicated forms that make them expensive to compute and difficult to optimize over.

The ``right'' kind of error to look at is dependent on the user's end goal, which explains why a standard evaluation metric has not yet been agreed upon \citep{costa2007}. In this paper, we are interested in settings where accuracy in the initial set of positive calls is desirable, rather than capturing all of the true positives in the dataset. This is motivated by the case of disease diagnosis, where physicians need to have confidence that the general category of disease has been correctly diagnosed, and would tolerate or even expect mistakes at more specific levels since those typically need to be corroborated by expert knowledge \citep{huang2010}. In this section, we define a novel metric towards this end. It is motivated by hit curve that is suitable for situations where 1) the number of true positives is small, 2) top calls are of most interest. Our metric not only inherits the two properties from hit curve, but overcomes the issue that optimizing the area under hit curve is intractable due to the nature of counting.

Given $M$ samples and $K$ candidate labels, suppose the classifier scores of the $n = KM$ instances are sorted via some criterion, say in a descending order, so that $s_{(1)}, \ldots, s_{(n)}$ represents the order in which they are called positive. Let $Q_{(1)}, \ldots, Q_{(n)}$ be the true status of these instances. Note that the x-axis of the hit curve represents the number of calls made, thus the expression for the area under the curve is equivalent to the sum of the number of true positives among the top $k$ calls, for every $k$. We use $I\{Q_{(k)} = 1 | s_1, \ldots, s_n\}$ to indicate that the $k$-th instance is a true positive given the classifier scores. This yields the convenient expression for the area under the hit curve:
\begin{eqnarray}\label{eq:AUC_hit}
  &&\text{AUC~of~ hit~curve}\nonumber\\
  &=&\sum_{i=1}^n \sum_{k=1}^i  I\{Q_{(k)} = 1 | s_1, \ldots, s_n\} \nonumber\\
  &=&\sum_{k=1}^n (n-k+1)I\{Q_{(k)} = 1 | s_1, \ldots, s_n\}
\end{eqnarray}
However, \eqref{eq:AUC_hit} is intractable to directly optimize over. To tackle it, we take expected values of \eqref{eq:AUC_hit} and arrive at
\begin{eqnarray}\label{eq:objfun}
  && \mathbb{E} [\text{AUC~of~ hit~curve}]\nonumber\\
  &=&\sum_{k=1}^n (n-k+1) P(Q_{(k)} = 1 | s_1, \ldots, s_n) \nonumber\\
  & \approx &\sum_{k=1}^n (n-k+1) P(Q_{(k)} = 1 | s_{(k)}) \nonumber \\ 
  & = &\sum_{k=1}^n (n-k+1) LPR_{(k)}.
\end{eqnarray}

We use \eqref{eq:objfun}, a function of LPR, to approximate the area under the hit curve, and call it early-call-driven area under curve (eAUC). The approximation step assumes that the k-th score contains sufficient information to infer the status of the associated instance, i.e., $Q_{(k)}$. We reason that the approximation is sensible with the following arguments. First, since top- or near top-level nodes in the hierarchy represent general classes, they are more likely to have well trained classifiers than the others. For these nodes, their own scores can provide strong evidence for the inference of $Q_{(k)}$. Second, nodes of strong signal, for example, positive top or near-top nodes or patients with obvious symptoms in reality, should have large scores and rank top in the ordering. Thus, for nodes with small k in the optimal topological ordering, the probability $P(Q_{(k)} = 1| s_1, s_2, \ldots, s_n)$ and its weight $(n - k + 1)$ are large, which indicates that the terms with small k dominate over those with large k. Therefore, \eqref{eq:objfun} is an appropriate approximation for the expected area of the hit curve. It not only gives rise to the computational feasibility, but also emphasizes our target on the top nodes.

\subsection{A novel ranking algorithm for HMC: HierLPR}\label{sec:method_HierLPR}
\begin{algorithm}[H]
\caption{The Chain-Merge algorithm.}\label{algo:chain_merge}
\textbf{Input: }$p$ chains $\mS = \{\text{node} \in \mathcal{C}_r: r = r_1, \ldots, r_p\}$.\\
\textbf{Procedure: }
\begin{algorithmic}[1]
\STATE Set $\mL=\emptyset$;
\STATE Compute $\{\bar\ell_{r, i}: i = 1, \dots|\mathcal{C}_r|, r = r_1, \ldots, r_p\}$.
\WHILE{$\mS \neq \emptyset$}
\STATE $(r',i') = \arg \underset{C_r(i) \subset \mS}{\max} \bar\ell_{r,i}.$
\STATE $\mL \gets \mL \oplus C_{r'}(i')$, where $\oplus$ indicates the concatenation of two sequences.\\
\STATE $\mS \gets (\mS \backslash C_{r'}) \cup (C_{r'}\backslash C_{r'}(i'))$.\\
\STATE Update the average LPR's of the remaining nodes. 
\ENDWHILE      
\end{algorithmic}
\textbf{Output: }$\mL$.
\end{algorithm}

Now we reduce the HMC problem to maximizing eAUC under the hierarchy constraint, for which the solution is a topological ordering of instances. Throughout this section, we only consider the HMC problem with the forest hierarchy -- the labels follow a tree structure and the entire graph may consist of several trees (Figure \ref{fig:notation_graph}). The ordering task can be thought of as merging all of the forests into a single chain. Starting from a simple case, Algorithm \ref{algo:chain_merge} provides a procedure for merging multiple chains, which is illustrated by Figure \ref{fig:merge_illustration} (i). Algorithm \ref{algo:chain_merge} acts as the building block for Algorithm \ref{algo:original_algo} to merge multiple complicated trees with respect to the hierarchical structure, which is illustrated by Figure \ref{fig:merge_illustration} (ii).

\begin{algorithm}[H]
\caption{The HierLPR algorithm.}\label{algo:original_algo}
\textbf{Input: }{A forest $\mS$}\\
\textbf{Procedure: }
\begin{algorithmic}[1]
\STATE Figure out $\mathcal{P}$ (defined in \eqref{eq:def_P}).
\WHILE{$\mathcal{P} \neq \emptyset$}
\STATE  Pop out one $v$ from $\mathcal{P}$. Take two children of $v$, $r_1$ and $r_2$.
\STATE  Feed $C_{r_1}$ and $C_{r_2}$ into Algorithm \ref{algo:chain_merge} and obtain $\mL(r_1, r_2)$.
\STATE  Replace $C_{r_1}$ and $C_{r_2}$ with $\mL(r_1, r_2)$.
\STATE  Update $\mathcal{P}$.
\ENDWHILE
\IF{There remain multiple chains}
\STATE Apply Algorithm \ref{algo:chain_merge} to these chains.
\ENDIF
\STATE Let $\mL$ be the resulting chain.
\end{algorithmic}[1]
\textbf{Output: }{a sorted list $\mL$}.
\end{algorithm}

\begin{figure}
  \centering
   \includegraphics[width=0.9\textwidth]{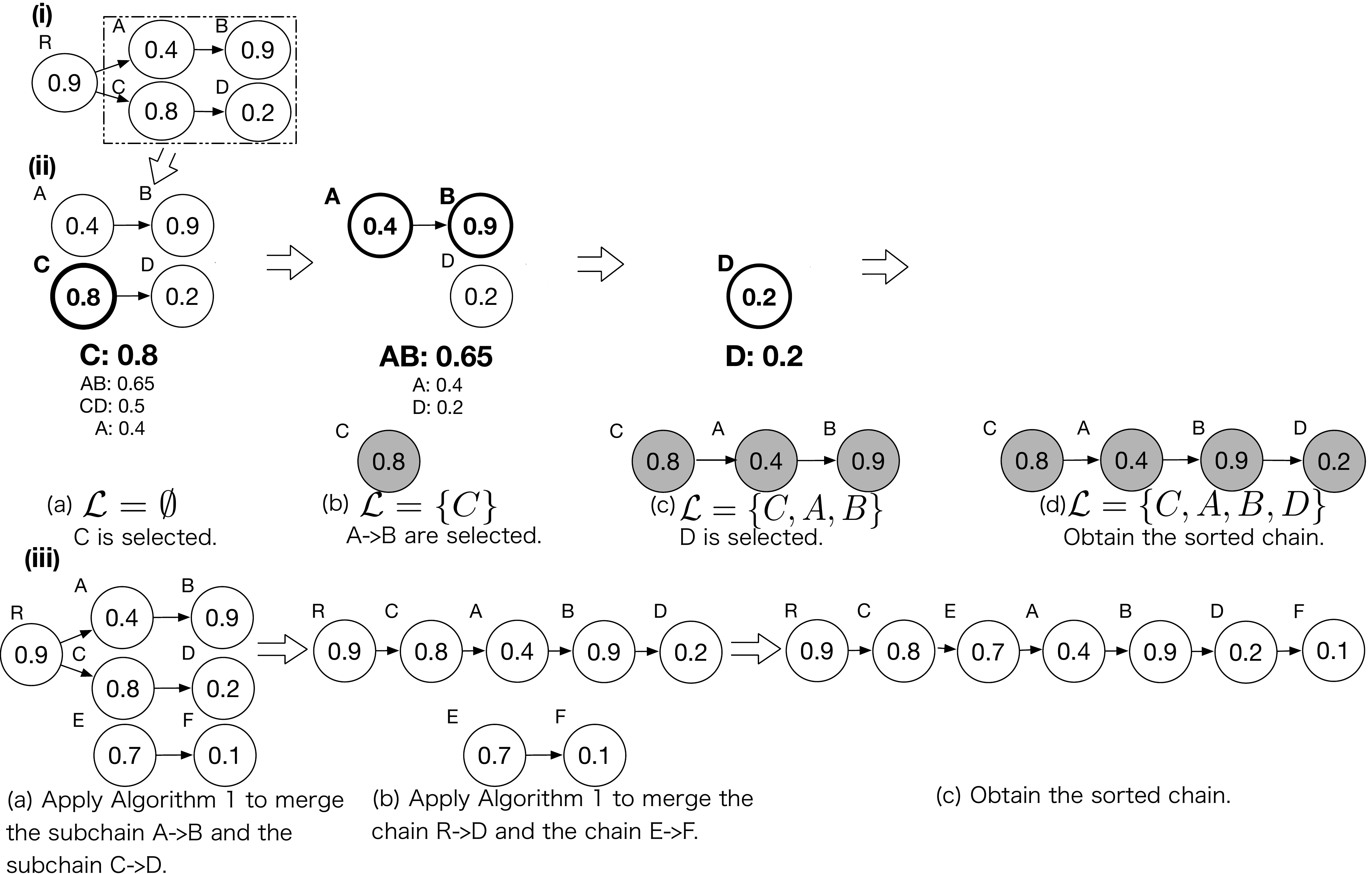}
   \caption{An example of the merging process in Algorithm \ref{algo:chain_merge} and Algorithm \ref{algo:original_algo}. (i) A tree of five nodes to be merged. (ii) Merge the two sub-chains of two nodes in the dash square in (i) using Algorithm \ref{algo:chain_merge}. The nodes in bold form a sub-chain of the highest averaging LPR, and the nodes filled in grey gives a sorted list produced by the merging procedure; (iii) Merge two trees using Algorithm \ref{algo:original_algo}.}
     \label{fig:merge_illustration}
\end{figure}

Algorithm \ref{algo:original_algo} is inspired by the idea that sorting the nodes from below does not affect the sorting of the nodes above in the hierarchical structure. Therefore, it is natural to merge the nodes from bottom to top. Another fundamental nature of Algorithm \ref{algo:original_algo}, originating from Algorithm \ref{algo:chain_merge}, is to detect and remove the largest moving average along the chain. It makes the strong postive nodes from lower down in the hierarchy to help weak signals at the top appear early in the ranking list. Thus, HierLPR dos not suffer from the blocking problem that is common in local classifiers. Finally, Algorithm \ref{algo:original_algo} can be justified from a theoretical perspective ---  Theorem \ref{thm:opt_original_algo} shows that HierLPR gives rise to a topological ordering that maximizes eAUC. 

Nonetheless, Algorithm \ref{algo:original_algo} is inefficient due to the exhaustive merging and repeated computations of the moving average at each iteration. For each sample, the time complexity reaches up to $\mathcal{O}(K^3)$, and thus merging all the $M$ samples costs $\mathcal{O}(nK^2 + n \log M)$ computations. By eliminating the redundant computations, we can provide a faster implementation of Algorithm \ref{algo:original_algo} that costs $\mathcal{O}(n \log n)$ time complexity (See Section \ref{sec:faster_version}). On the other hand, we note this issue is not a practical concern because: (i) the hierarchy structure is given and fixed, so $K$ can be regarded as a constant. In the scenarios of our interest like the disease diagnosis, $K \sim 100$ in most cases; (ii) parallel computing is more helpful than improving the worst-case constant in terms of computational time.

\begin{theorem}\label{thm:opt_original_algo}
Finding out the optimal (in terms of eAUC) topological ordering of the original structure is equivalent to finding out the optimal topological ordering of the structure that replaces any sub-tree with the corresponding merged chain by Algorithm \ref{algo:original_algo}. Hence, Algorithm \ref{algo:original_algo} leads to the optimal topological ordering when all the trees are merged into a single chain.
\end{theorem}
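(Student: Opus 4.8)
The plan is to prove the first assertion (that a single subtree replacement preserves the optimal topological ordering) and then obtain the second assertion by iterating it; the engine of the whole argument is an exchange inequality for eAUC. First I would rewrite the objective in a more convenient form. Since $\sum_{k=1}^{n} LPR_{(k)}$ takes the same value for every topological ordering, maximizing the eAUC $\sum_{k=1}^{n}(n-k+1)LPR_{(k)} = (n+1)\sum_{k}LPR_{(k)} - \sum_{k}k\,LPR_{(k)}$ is equivalent to minimizing $\sum_{k=1}^{n} k\,LPR_{(k)}$. In this form the task is a unit-time weighted-completion-time problem on an out-forest precedence poset (each parent before its children), which is series-parallel; this viewpoint makes the role of the prefix average $\bar\ell_{r,i}$ transparent and lets me reason by merging.

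Next I would establish the key adjacent-block interchange inequality. Take any topological ordering that contains two contiguous blocks, $A$ followed by $B$, with no precedence constraint across them, of sizes $a,b$ and $LPR$-sums $W_A,W_B$. Swapping to $B$ followed by $A$ advances every element of $B$ by $a$ positions and delays every element of $A$ by $b$, so it changes $\sum_{k}k\,LPR_{(k)}$ by $bW_A - aW_B$; hence the swap does not decrease the eAUC exactly when $W_B/b \ge W_A/a$, i.e.\ when $\bar\ell_B \ge \bar\ell_A$. The decisive feature is that this sign depends only on the two averages and not on where in the ordering the blocks sit. Using this, I would prove the Chain-Merge lemma, namely that Algorithm~\ref{algo:chain_merge} returns an optimal interleaving of parallel chains: a maximum-average prefix $C_{r'}(i')$ can be advanced to the front by a sequence of legal block swaps without decreasing the eAUC (legality holds because a prefix preserves its own chain order, and each swap is against a block of no-larger average), after which one inducts on the remaining chains. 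This is exactly the greedy rule in line~4 of Algorithm~\ref{algo:chain_merge}.

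With these tools in hand I would prove the replacement step. Fix $v \in \mathcal{P}$ with child-chains $C_{r_1}, C_{r_2}$ and write $S = C_{r_1}\cup C_{r_2}$. One inclusion is immediate: every topological ordering of the reduced structure, in which $S$ is the single merged chain $\mL(r_1,r_2)$, is also a topological ordering of the original structure, so the reduced optimum is at most the original optimum. For the reverse inequality I must exhibit an optimal ordering of the original structure whose induced order on $S$ is precisely the Algorithm~\ref{algo:chain_merge} merge. Here I would exploit the defining property of $\mathcal{P}$: each node of $S$ is a leaf-terminating descendant of $v$, so its only precedence relations are internal to its chain and to $v$, which must precede all of $S$. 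Consequently any node outside $S$ that lies between two elements of $S$ has no precedence with $S$ and may be swapped past an adjacent $S$-block. Starting from an optimal ordering I would first evacuate all such interlopers from the span of $S$ until $S$ is contiguous, and then, since a contiguous $S$ occupies positions carrying consecutive weights, invoke the Chain-Merge lemma to reorder $S$ internally into the merge order without loss. The resulting ordering is optimal, respects the reduced structure, and yields the reverse inequality.

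Finally the second assertion follows by induction: each pass of Algorithm~\ref{algo:original_algo} replaces the child-chains of some $v \in \mathcal{P}$ by their merged chain, which by the replacement step preserves the optimal eAUC while strictly simplifying the forest toward a single chain (pairwise merging agrees with the full merge of all siblings because the greedy-average rule is insensitive to the order in which siblings are combined, which I would check separately); when only one chain survives, its order is forced and optimal. I expect the main obstacle to be the evacuation step above. Because merge-by-average is genuinely optimal only when the induced positional weights are uniform --- it can fail outright for arbitrary decreasing weights --- one cannot simply re-solve for $S$ on whatever positions it happens to occupy in an arbitrary optimum, and purely local swaps can stall when an interloper's average is sandwiched between those of its two neighbors. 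Showing that the merge order nevertheless surfaces in some global optimum is the delicate core: it must be organized around the recursive series-parallel structure (equivalently, the decomposition theory for $\sum w_j C_j$ under series-parallel precedence) rather than carried out greedily, and this is exactly where the tree hypothesis and the ``children are terminal chains'' condition defining $\mathcal{P}$ are indispensable.
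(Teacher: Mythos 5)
Your setup is sound: the reduction to minimizing $\sum_k k\,LPR_{(k)}$, the adjacent-block interchange inequality, and the easy inclusion (every topological ordering of the reduced structure is one of the original structure) all match what is needed, and your Chain-Merge lemma for parallel chains occupying consecutive positions is correct. But the reverse direction, which is the entire content of the theorem, is left open. Your plan is to take an optimal ordering of the original structure, evacuate the non-$S$ ``interlopers'' until $S=C_{r_1}\cup C_{r_2}$ is contiguous, and then re-sort $S$ internally. That is both stronger than what the theorem asserts and, as you yourself observe, unattainable by local swaps: an interloper whose average is strictly sandwiched between the averages of its two neighboring $S$-blocks cannot be moved in either direction without strictly decreasing eAUC. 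The theorem does not require $S$ to be contiguous in any optimum --- the reduced structure still permits the merged chain to be interleaved with the remaining nodes --- it only requires that the \emph{relative} order of the nodes of $S$ in some optimum agree with the Chain-Merge output. You flag the resolution (``organize around the series-parallel decomposition'') but do not supply it, so the delicate core remains unproved.

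The paper closes exactly this gap without ever making $S$ contiguous. Given an arbitrary valid ordering $\mathcal{O}_A$, it builds the comparison ordering by running Algorithm~\ref{algo:chain_merge} on \emph{three} chains --- $X$, $Y$, and the sequence $W_{(1)},\dots,W_{(n')}$ of interlopers treated as a single chain in their $\mathcal{O}_A$-order --- so the $W$'s stay interleaved with $S$. The one inequality that must then be proved (Lemma~\ref{lemma:first_subchain}, Appendix~\ref{appendix:proof_lemma_first_subchain}) is a \emph{global} rearrangement bound, not a sequence of adjacent swaps: if $X_{(1)},\dots,X_{(t)}$ is the maximal-average prefix with average $a$ and it currently sits at scattered positions $i_1<\dots<i_t$, moving it to the front changes the objective by a quantity that splits into $\sum_c(i_c-c)\left(X_{(c)}-a\right)$ plus $\sum_c(i_c-c)\bigl(a-\frac{1}{i_c-c}\sum_{k\le i_c-c}Z_{(k)}\bigr)$, and each piece is nonnegative --- the first because every suffix of the maximal prefix has average at least $a$ (equation~\ref{eq:backwards_average}), the second because every displaced prefix of the remaining chains has average at most $a$ by maximality. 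This one-shot inequality is precisely what defeats the sandwiched-interloper obstruction, and it is the ingredient your proposal is missing. (Your side remark about checking that pairwise sibling merges agree with the simultaneous merge is fine and is not where the difficulty lies.)
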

\begin{proof}
  The simplest case is a graph with only one chain, and the theorem obviously holds for this case. Next, suppose there are two chain branches, $X_{(1)}, \ldots, X_{(m)}$ and $Y_{(1)}, \ldots, Y_{(m^{\prime})}$, both of which share the same parent node in $\mathcal{P}$. Directly merging these two chain branches by Algorithm \ref{algo:chain_merge} yields an ordering denoted as $\mathcal{O}_{XY}$. Now given an arbitrary ordering $\mathcal{O}_{A}$ of all the $n$ nodes in the entire graph, which respects the tree hierarchy. Denote by $p_1$ the first position of the nodes among $X_{(i)}'s$ and $Y_{(j)}'s$ within $\mathcal{O}_A$, and denote all the nodes, other than $X_{(i)}'s$ and $Y_{(j)}'s$ and located after the $p_1$ position, by $W_{(1)}, \ldots, W_{(n^{\prime})}$ (the position of $W_{(l)}$ is ahead of that of $W_{(l^{\prime})}$ in $\mathcal{O}_A$ if $l < l^{\prime}$). We want to show that
\begin{lemma}\label{lemma:equiv_chain}  
There exists an ordering $\mathcal{O}_{A^{\prime}}$ that is at least as good as $\mathcal{O}_A$, in terms of eAUC, such that {\Large $(\star)$}{\em ~Node $B$ is located ahead of Node $C$ in $\mathcal{O}_A$ if it is the case in $\mathcal{O}_{XY}$, where $B$, $C$ are two distinct nodes from $X_{(i)}'s$ and $Y_{(i)}'s$.}
\end{lemma}
Lemma \ref{lemma:equiv_chain} implies that in order to figure out the optimal ordering of the original tree structure, it boils down to replacing the two branches $X_{(1)}, \ldots, X_{(m)}$ and $Y_{(1)}, \ldots, Y_{(m^{\prime})}$ by a single chain characterized by $\mathcal{O}_{XY}$ and seeking the optimal ordering of the new structure. The ordering $\mathcal{O}_{A^{\prime}}$ mentioned above can be constructed easily with two constraints: (i) fixing the nodes located ahead of the position $p_1$ as well as their ordering as in $\mathcal{O}_A$; (ii) the position of $W_{(l)}$ is ahead of $W_{(l^{\prime})}$ if $l < l^{\prime}$, as in $\mathcal{O}_A$. The first constraint is straightforward, and the second constraint can be satisfied by applying Algorithm \ref{algo:chain_merge} to $X_{(1)}, \ldots, X_{(m)}$, $Y_{(1)}, \ldots, Y_{m^{\prime}}$ and $W_{(1)}, \ldots, W_{(n^{\prime})}$. Here, we take $W_{(1)}, \ldots, W_{(n^{\prime})}$ as a chain, regardless of their original structure. Without loss of generality, we assume the first maximal chain branch figured out by Algorithm \ref{algo:chain_merge} is $X_{(1)}, \ldots, X_{(t)}$, $t \leq m$ (we can skip the case that a part of $W_{(l)}$'s is the first maximal chain branch since it does not affect {\Large ($\star$)}). In order to prove Lemma \ref{lemma:equiv_chain}, it is reduced to showing that

\begin{lemma}\label{lemma:first_subchain}
{\em Conditional on (i) and (ii), the ordering of the maximal eAUC puts $X_{(1)}, \ldots, X_{(t)}$ at the position $p_1, \ldots, p_1 + t -1$ respectively.}
\end{lemma}
The detailed proof of Lemma \ref{lemma:first_subchain} is deferred to Appendix \ref{appendix:proof_lemma_first_subchain}. Note that $X_{(1)}, \ldots, X_{(t)}$ must be located in the first place in the ordering $\mathcal{O}_{XY}$. Therefore, by applying Lemma \ref{lemma:first_subchain} in an inductive way (exclude $X_{(1)}, \ldots, X_{(t)}$ and apply the same argument on the remaining nodes), we can conclude the constructed $\mathcal{O}_{A^{\prime}}$ is at least as good as $\mathcal{O}_{A}$ and satisfies {\Large $(\star)$}. Here, we need to clarify the point that putting $X_{(1)}, \ldots, X_{(t)}$ in such place does not violate the tree hierarchy since $X_{(i)}'s$ and $Y_{(j)}'s$ are the children chains of the same node, and none of $W_{(l)}'s$ can be an ancestor of $X_{(i)}'s$ or $Y_{(j)}'s$, otherwise $\mathcal{O}_A$ is not a valid ordering. Thus the proof is completed.
\end{proof}

\subsection{Faster versions of HierLPR}\label{sec:faster_version}
To solve the scalability issue of HierLPR, we propose a faster version of HierLPR by reducing redundant and repetitive computations in Algorithm \ref{algo:original_algo}. The speed-up is motivated by the following observations: 1) Algorithm \ref{algo:chain_merge} breaks a single chain into multiple blocks via the formula $(r',i') = \arg \underset{C_r(i) \subset \mS}{\max} \bar\ell_{r,i}.$; 2) It can be shown that these blocks can only be agglomerated into a larger block rather than being further partitioned into smaller ones (see Appendix \ref{appendix:faster_algo}); 3) the agglomeration occurs only between a parent block and its child block in the hierarchy. Thus, HierLPR can be implemented at the block level so that the partition is only executed once during multiple merging. By considering the above facts and taking care of other details, we obtain a faster version of HierLPR (see Algorithm \ref{algo:faster_algo}), which  costs $\mathcal{O}(n \log n)$ computations. Its implementation and other details are delegated to Appendix \ref{appendix:faster_algo}.

Alternatively, we found an existing algorithm called Condensing Sort and Select Algorithm (CSSA) \citep{baraniuk1994} that is also of $\mathcal{O}(n\log n)$ complexity and can be adapted to solve \eqref{eq:objfun}. \cite{bi2011} first extended CSSA in their proposed decision rule for the HMC problem. In their paper, CSSA was used to provide an approximate solution to the integer programming problem
\begin{eqnarray}
\max_{\Psi}&& \sum_{k \in \mathcal{T}} B(k) \Psi(k)\label{opt:bi2011_original}\\
s.t.&& \Psi(k) \in \{0, 1\},\forall k, \quad \sum_{k \in \mathcal{T}} \Psi(k) = L,  \label{const:binary}\\
&&\Psi \text{ is }\mathcal{T}\text{-nonincreasing},\nonumber
\end{eqnarray}
where $B(k)$ is a score produced by kernel dependency estimation (KDE) approach \citep{weston2003kernel}. Instead of directly solving \eqref{opt:bi2011_original} with \eqref{const:binary}, \cite{bi2011} tackles a relaxed problem by replacing the binary constraint \eqref{const:binary} by
\begin{equation}
  \label{const:relaxed}
  \Psi(k) \geq 0, \forall k, \quad \Psi(0) = 1, \sum_{k \in \mathcal{T}} \Psi(k) \leq L.
\end{equation}
It turns out that CSSA can be modified as Algorithm \ref{algo:equiv_algo} that is shown to generate the same result as Algorithm \ref{algo:original_algo} (see Theorem \ref{thm:equiv_algo}). On the other hand, we note CSSA and Algorithm \ref{algo:original_algo} differ in the following aspects. First, Algorithm \ref{algo:original_algo} is independently introduced and interpreted in the context of eAUC, with a statistical justification for ordering nodes using LPR in particular. CSSA originates in signal processing and has been successfully used in wavelet approximation and model-based compressed sensing \citep{baraniuk1994, baraniuk1999optimal, baraniuk2010model}. Second, 
the optimality results of \cite{bi2011} cannot be directly applied to derive the optimality of Algorithm \ref{algo:equiv_algo}, because the relaxed constraint \eqref{const:relaxed} allows $\Psi(k)$'s that are assigned to the nodes in the last iteration to be fraction values rather than $1$ (see detailed discussion in Appendix \ref{appendix:CSSA_discussion}). In addition, Algorithm \ref{algo:original_algo} merges the chains from the bottom up, rather than as in CSSA, constructing ordered sets of nodes called supernodes by starting from the node with the largest value in the graph and moving outward. It is easy to see that the blocks defined in Algorithm \ref{algo:faster_algo} (the faster version of HierLPR mentioned above) are exactly the same as the supernodes taken off in Algorithm \ref{algo:equiv_algo}. Hence, our independently proposed algorithm provides some novel insight into CSSA under the HMC setting.

\begin{theorem}\label{thm:equiv_algo}
Algorithm \ref{algo:original_algo} and Algorithm \ref{algo:equiv_algo} yield the same ordering, so Algorithm \ref{algo:equiv_algo} maximizes eAUC as well.
\end{theorem}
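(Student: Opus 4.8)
The plan is to prove the stronger statement underlying the theorem: the two algorithms induce the \emph{same partition of the forest into ordered blocks}, and they emit these blocks in the same order, whence the two output lists are identical. This route deliberately avoids invoking the optimality analysis of \cite{bi2011}, whose relaxed constraint \eqref{const:relaxed} permits a fractional $\Psi(k)$ in the final iteration and therefore does not transfer to Algorithm \ref{algo:equiv_algo}. Once the orderings are shown to coincide, the second clause is immediate from Theorem \ref{thm:opt_original_algo}: Algorithm \ref{algo:original_algo} maximizes eAUC, hence so does Algorithm \ref{algo:equiv_algo}. Moreover, since Algorithm \ref{algo:faster_algo} is a faithful re-implementation of Algorithm \ref{algo:original_algo} returning the same list (Section \ref{sec:faster_version}), it suffices to compare the blocks peeled by Algorithm \ref{algo:faster_algo} with the supernodes condensed by Algorithm \ref{algo:equiv_algo}.

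First I would settle the single-chain case, i.e. the equivalence between the decomposition performed inside Algorithm \ref{algo:chain_merge} and CSSA's condensation restricted to a chain. On a chain with LPR values in topological order, Chain-Merge repeatedly peels the maximal-average prefix $\arg\max_i \bar\ell_{r,i}$ of the current head, while CSSA repeatedly merges any supernode whose average exceeds its parent's. I would show both terminate at the \emph{same} partition by proving a uniqueness lemma: a partition of the chain into contiguous blocks is produced by either procedure if and only if the block averages are strictly decreasing along the chain (under the longest-maximal-prefix tie-break) and no block has a proper prefix of strictly larger average. This is the familiar pool-adjacent-violators characterization; the forward direction for Chain-Merge follows from the definition of the peeled prefix, that for CSSA from the condensation stopping rule, and the reverse (uniqueness) from a standard adjacent-violator exchange argument. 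Within each block both algorithms preserve the chain order, and both emit blocks in order of decreasing average, so the two sub-lists agree.

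Next I would lift this to forests. Algorithm \ref{algo:original_algo} processes $\mathcal{P}$ bottom-up, merging children chains at each $v \in \mathcal{P}$ via Chain-Merge, whereas Algorithm \ref{algo:equiv_algo} condenses outward from the global maximum; a priori the two visit the tree in different orders. The key step is to show that the final monotone block partition is \emph{independent of the order} in which merges or condensations are carried out --- a confluence (Church--Rosser-type) property of the condensation operator. I would establish this by exhibiting the partition as the unique fixed point of the rule ``merge every block whose average exceeds its parent's,'' and by checking that a legal merge never invalidates another pending merge nor creates a block violating the local characterization above; hence any maximal sequence of merges reaches the same fixpoint. Combined with the chain case, this forces the block sets of the two algorithms to coincide, and the global emission order (decreasing block average, topological order within a block) then matches as well.

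The main obstacle I anticipate is precisely this confluence argument across the tree, together with tie handling: when two sub-chains share a parent, or when two blocks carry equal average, Chain-Merge and CSSA may break ties differently, so I must fix one consistent convention (the topological, earlier-root rule already implicit in the $\arg\max$) and verify that ties never separate nodes the characterization requires to be co-blocked. A secondary subtlety, flagged in the excerpt, is the fractional value that the relaxed CSSA can assign in its last step; I would argue that this fraction influences only the selection budget $L$ and not the \emph{ordering} of supernodes, so it is harmless for the claim about the produced list. With confluence and a fixed tie-breaking rule in hand, the block identity is immediate and Theorem \ref{thm:equiv_algo} follows.
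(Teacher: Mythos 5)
Your reduction of the second clause to Theorem \ref{thm:opt_original_algo} is fine, your treatment of the fractional $\Psi$ in the relaxed CSSA is fine, and your single-chain step (the pool-adjacent-violators characterization of the block decomposition) is essentially the block structure the paper itself establishes via \eqref{eq:breaking_points} in Appendix \ref{appendix:faster_algo}. The problem is the step you yourself identify as the crux: the confluence claim is false. The final partition is \emph{not} the unique fixed point of the rule ``merge every block whose average exceeds its parent's,'' and different maximal sequences of legal merges can terminate at different partitions and different output lists. Concretely, take a root $r$ with $LPR_r=1$ and two leaf children $a,b$ with $LPR_a=4$, $LPR_b=2$. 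Merging $b$ into $r$ first is legal ($2>1$) and yields a supernode of average $1.5$; then $a$ merges in ($4>1.5$), producing the single block $(r,b,a)$ with average $7/3$ and output order $1,2,4$. Merging the global maximum $a$ first yields blocks $\{r,a\}$ (average $2.5$) and $\{b\}$ (average $2$), hence output order $1,4,2$. Both are maximal merge sequences, but the partitions differ, the orders differ, and the eAUC values differ ($13$ versus $11$ with weights $3,2,1$). So a ``legal merge'' \emph{can} invalidate the characterization: the correctness of the condensation depends essentially on always selecting the current maximum-average supernode, not merely on local parent--child comparisons, and a Church--Rosser argument over arbitrary legal merges cannot go through.

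What is actually needed --- and what the paper does --- is an argument tied to the specific greedy selection rule. The paper reduces to the case of several chains hanging off a common root $R\in\mathcal{P}$ and proves (Lemma \ref{lemma:first_S1}) that the first supernode CSSA condenses into $R$ (or peels off right after $R$) is exactly the maximal-average prefix that Algorithm \ref{algo:chain_merge} would peel first when merging those chains; the three-step comparison in Appendix \ref{appendix:proof_lemma_first_S1} (longer prefixes of the same chain, shorter prefixes, and prefixes of sibling chains) is precisely the content that your confluence step was meant to replace. Induction on the remaining nodes then finishes the equivalence. To repair your proof you would have to restrict the rewriting system to merges of supernodes that are maximal among all currently available ones (at which point you are re-deriving the paper's lemma), or find some other invariant preserved by the max-first order specifically; order-independence of unrestricted merges is not available.
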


\begin{algorithm}[H]
  \caption{An equivalent algorithm modified from CSSA.}\label{algo:equiv_algo}
  \textbf{Input: }A forest $\mathcal{S}$\\
  Denote $Par(S_i)$ as the parent of supernode $S_i$,  $n(S_i)$ as the number of nodes in $S_i$, and $\mL$ as a vector for holding sorted LPR values.\\
\textbf{Procedure: }
\begin{algorithmic}[1]  
\STATE Initialize with one node per LPR value, and each node as its own supernode, $\mL = []$ (empty vector).
\WHILE{$|\mL| < n$}
\STATE Find $i = \argmax_i\ \frac{1}{n(S_i)} \sum_{j \in S_i} LPR_j$
\IF{$Par(S_i) = \emptyset$}
\STATE Take the nodes in $S_i$ off the graph and append them to $\mL$.
\ELSE
\STATE Condense $S_i$ and $Par(S_i)$ into a supernode.
\ENDIF
\ENDWHILE
\end{algorithmic}
\textbf{Output: }A hierarchically consistent ordering of $n$ LPR values.
\end{algorithm}

\section{Experiments} \label{sec:experiments}
Given the first-stage classification scores generated by arbitrary classifiers, the LPR values are computed by fitting a local quadratic kernel smoother as \cite{jiang2014}. To use ClusHMC, we followed the implementation of \cite{lee2013}. More details on this implementation can be found in Appendix \ref{appendix:clusHMC_details}.

\subsection{Synthetic Data}\label{sec:simulation}
We examined the performance of HierLPR against ClusHMC on four simulation settings shown in Figure \ref{class_sim_settings}. The settings are comprised of simple three-node structures with mixes of high- and low-quality nodes and varying levels of dependence between the nodes. Here, the quality of a node refers to the ability of the corresponding classifier to distinguish between the positive and negative instances. These toy settings are not meant to emulate a complex real world hierarchy, but are designed to be small and simple so that the factors affecting a classifier's performance could be studied in a controlled way where their effects could be better isolated. Settings 1 and 2 are designed to investigate the influence of the high-quality nodes sitting at the top of the hierarchy, and they differ in the positions of the low-quality nodes. Setting 3 targets at low-quality nodes placed at the top of the hierarchy. We study Setting 4 to understand how each method performs when there are standalone nodes. In addition to the above studies, a comparison on a large graph with more realistic and complicated hierarchical relationships follows using real data in Section \ref{sec:real_data}.

\begin{table}[ht]
  \caption{Score distribution in terms of the node quality.} 
\label{tbl:quality_nodes}
  \centering
  \begin{tabular}{l|lll}
    \hline
    Quality & Positive instance & Negative instance & Node color\\
    \hline
    High    & Beta$(6, 2)$      &  Beta$(2, 6)$     & light grey\\
    Medium  & Beta$(6, 4)$      &  Beta$(4, 6)$     & medium grey\\
    Low     & Beta$(6, 5.5)$    &  Beta$(5.5, 6)$   & black\\
    \hline
\end{tabular}
\end{table}

\begin{figure}[h!]
  \centering
  \caption{Three-node graphs tested under simulation. Light grey, medium grey, and black correspond to high, medium, and low quality, respectively.}
\label{class_sim_settings}
\includegraphics[width=0.6\textwidth]{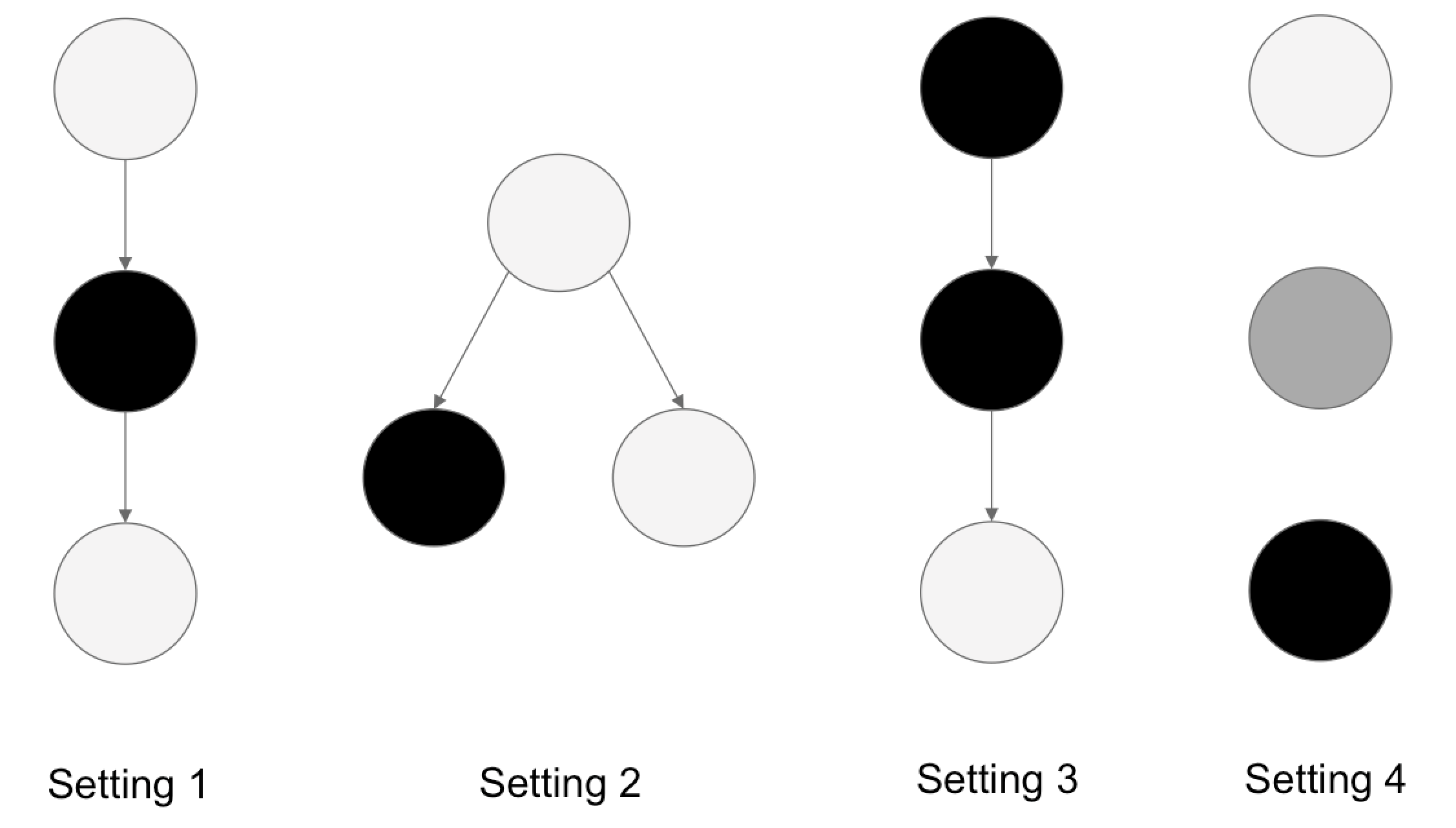}
\end{figure}

For each simulation setting, 100 datasets are generated. Each simulation dataset consists of 5,000 training instances and 1,000 test instances. We generate the true instance status as follows. For each dataset from a hierarchical setting, the conditional probabilities $P(Q_i = 1 | Q_{\text{parent}(i)} = 1)$ are randomly generated from a uniform distribution, with the constraint that each dataset has to have a minimum of 15 positive instances in the training set, which amounts to a minimum prevalence of 0.3\% for any class. Given the instance status, the instance score (the first-stage classification score) is sampled from the status-specific distribution --- data are generated from a Beta($\eta$, 6) distribution for the negative case and a Beta(6, $\eta$) distribution for the positive case, where $\eta = 2,~4,~5.5$ for the high, medium, low node quality respectively. Details of the score generation mechanism can be found in Table \ref{tbl:quality_nodes}.

The snapshot precision-recall curves with recall rates less than $0.5$, averaged over all 100 replications, for methods ClusHMC, HierLPR, and LPR (sorting LPRs regardless of the hierarchy constraint) are displayed in Figure \ref{class_sim_PR1}. As expected, HierLPR performs better than ClusHMC under settings 1 and 2, because HierLPR is designed to concentrate on the nodes at top levels. Setting 4 is favorable to HierLPR in that it is reduced to the method of \cite{jiang2014} that maximizes the pooled precision rate at any pooled recall rate. In contrast, ClusHMC determines its splits assuming dependence, so when the nodes are independent as in the non-hierarchical case, any rules or association it finds are due to random noise. On the other hand, the comparative performance of HierLPR worsens as the top nodes are of low quality in setting 3. This result is caused by the true positives seated at the low-quality top nodes that are associated with low LPR values. The LPR-alone approach looks better in the beginning because the trouble caused by low-quality nodes can be ignored by violating the hierarchy. However, we see a sharp drop in its precision-recall curve after the initial calls, and subsequently it is dominated by the curve of HierLPR. This observation validates the fact that the hierarchy constraint can help correct the prediction of noisy ancestors if there are children nodes of high quality, as shown in setting 3 in Figure \ref{class_sim_PR1}, or settings 1 and 3 in Figure \ref{class_sim_PR2} in Appendix \ref{appendix:more_results}. However, we notice that HierLPR is more affected by the low-quality top nodes than ClusHMC. This result can be intuitively understood from the fact that HierLPR pays most attention to the top level nodes, although to some extent it allows information exchange among the nodes as ClusHMC or other global classifiers. 


\begin{figure}[h!]
  \centering
  \includegraphics[width=0.8\textwidth]{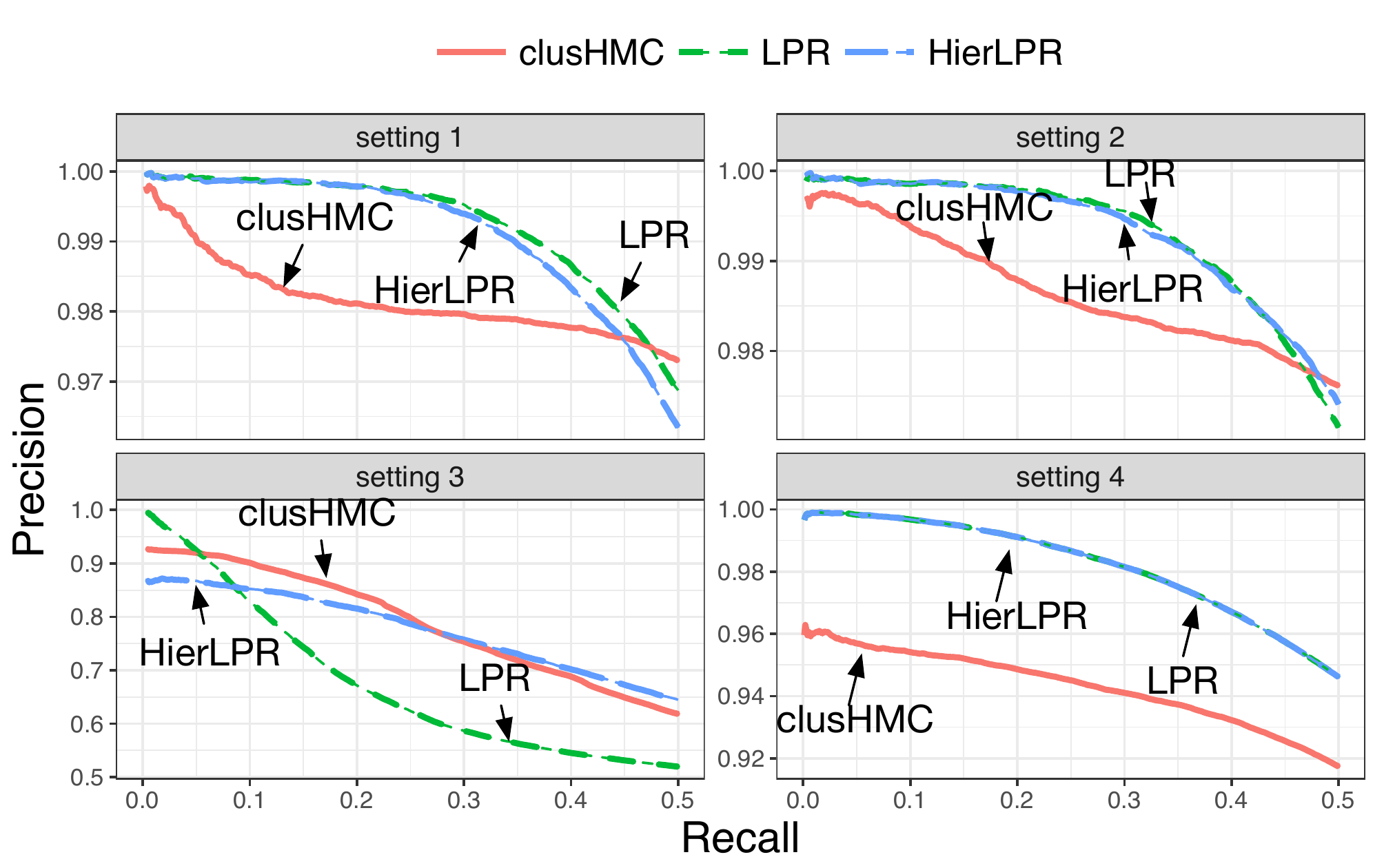}
  \caption{Precision recall curves comparing ClusHMC, HierLPR, and LPR for settings 1 through 4. The recall rates are limited between 0 and $0.5$ since we mainly focus on the initial calls.}
  \label{class_sim_PR1}
\end{figure}

Altogether, ClusHMC is advantageous when considering a connected graph with abnormal distributions of node quality (e.g. top nodes are of low quality), since it takes into account the hierarchical structure at large and allows thorough information exchange among the nodes. This advantage is generally shared by global classifiers. As a local classifier approach, it is not surprising that HierLPR would not prevail over ClusHMC in this regard. HierLPR performs best on classification problems where graphs are either shallow with high-quality classifiers sitting at the top of hierarchy or standalone nodes. In practice, the $K$ labels are often organized into several hierarchies, varying in their depth and breadth (see the disease diagnosis example in section \ref{sec:real_data}). In such a situation, HierLPR would work better since it has an explicit global objective function to guide the rankings across the disjoint hierarchies. By contrast, ClusHMC does not have an optimized strategy to handle disjoint hierarchies. In addition, HierLPR is highly flexible like other local classifiers --- it can be easily expanded to accommodate the addition of new labels or the deletion of unwanted ones. In contrast, ClusHMC has to retrain the datasets from scratch when the graph is altered. 

\subsection{Application to disease diagnosis}\label{sec:real_data}
\cite{huang2010} developed a classifier for predicting disease along the UMLS directed acyclic graph, trained on public microarray datasets from the National Center for Biotechnology Information (NCBI) Gene Expression Omnibus (GEO). They collected 100 studies, including a total of 196 datasets and 110 disease labels. The 110 nodes are grouped into 24 connected tree-like DAGs, that is, these DAGs can be transformed into trees by removing some edges (at most $7$ edges in this case); see Figure \ref{disease_diag_graph1} and \ref{disease_diag_graph2} (Appendix \ref{appendix:char_real_data}) for the detailed graph. In general, the graph has three properties: 1) It is shallow rather than deep; 2) It is scattered rather than highly connected; 3) Data redundancy occurs as an artifact of the label mining: the positive instances for a label are exactly the same as for its ancestors.

We followed the training procedure used in \cite{huang2010} to obtain first-stage local classifiers (see Appendix \ref{appendix:real_data_background}). Given the shallow and tree-like structure, Algorithm \ref{algo:original_algo} can still be applied at an acceptable cost --- split the DAG to all possible trees by assigning each node to one of its parents, apply Algorithm \ref{algo:original_algo} on each possible graph, then choose the one with the highest eAUC. There are at most $7$ nodes with multiple parents in one tree, so at most $2^7 = 128$ possible graphs. It is computationally acceptable in this case since there are only $110$ nodes for one sample. 

We compared the performance of HierLPR against ClusHMC, the first- and second-stage classifier calls of \cite{huang2010}, and the LPR multi-label method of \cite{jiang2014}. The resulting precision-recall curve is shown in Figure \ref{disease_diag_PR}. 

\begin{figure}[ht!]
\centering
\includegraphics[width=0.8\textwidth]{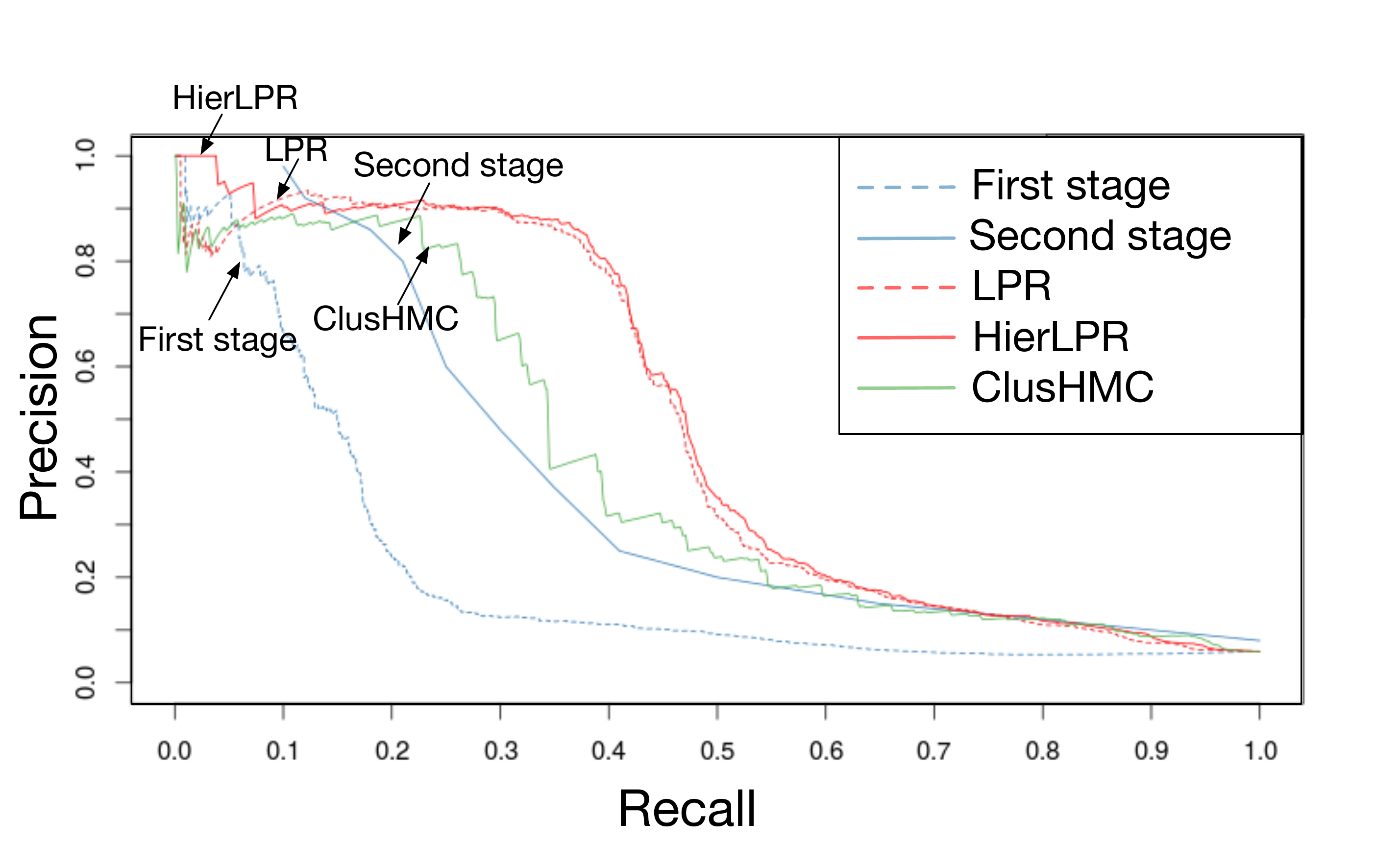}
\caption{Precision recall curve for several classifiers run on the real dataset of \cite{huang2010}.}
\label{disease_diag_PR}
\end{figure}

HierLPR performs better than all of the other methods overall, and it performs significantly better in the initial portion of the precision-recall curve, as we expected from both our theoretical and simulation results. To some extent, the exceptional performance of HierLPR is blessed by the characteristics of the real data  --- over 20\% of the nodes are standalone, while the rest are in shallow hierarchical graphs. Specifically, HierLPR works better in this scenario since it has an explicit global objective function to guide the rankings across the disjoint hierarchies. By contrast, ClusHMC does not have an optimized strategy to handle disjoint hierarchies. 
A closer inspection into the call order of the true positives from the standalone nodes is revealing: for HierLPR, these are some of the first instances to be called; in contrast, ClusHMC calls most of them after over half of the other truly positive instances have been called, illustrating the difficulty ClusHMC has in detecting these particular cases. 

\subsection{Cutoff Selection}\label{sec:cutoff_selection}
Our algorithm can generate an optimal topological ordering in terms of eAUC, but it does not suffice for most practical uses -- people who make decisions need to know where to cut the ranking list to accept the top ones. To this end, we propose splitting the training data into two parts, one for training the individual classifiers and computing the LPRs, and the other for determining the cutoffs. Specifically, the classifiers and LPR computations learned from the first set can be applied to the data of the second set to produce LPR scores. Then HierLPR takes these LPR scores and produces a sorted list of instances for the second set. Since the truth is known on this second dataset, metrics like F-measure or precision (1 - FDR) can be computed for an arbitrary LPR cutoff, so that a desired cutoff in the HierLPR ranking results can be chosen for controlled FDR or maximal F-measure. We denote this cutoff by $LPR^*$. The evaluation on the goodness of this cutoff selection method is performed on the testing set. 

\begin{table}[ht]
  \centering
  \caption{Differences (in percentage) between the measurements computed using the ground truth and those computed using the LPR cutoff selected from the training data. For example, the entry corresponding to setting 1 and ``$90\%$ precision'' refers to $0.9$ minus the testing precision that is calculated using $LPR^*$ associated with $90\%$ precision on the training set. The numbers in the parenthesis indicate the standard deviation in percentage. } 
  \begin{tabular}{c|cccc}
  \hline  
Setting    &  1	& 2	& 3&	 4 \\
\hline
  max F-measure& 0.57 & 0.49 & 7.51 & 0.24 \\
  & \scriptsize (0.64) & \scriptsize(0.64) & \scriptsize (11.64) & \scriptsize (0.45) \\
  $80\%$ precision & 1.25  & 0.71 & 11.31 & 3.13\\
    &\scriptsize (7.7)  & \scriptsize (5.96) & \scriptsize (20.48) & \scriptsize (14.28)\\
  $90\%$ precision & 0.56 & 0.39 & 17.53 & 8.18 \\
    & \scriptsize (3.59) & \scriptsize (2.94) & \scriptsize (18.39) & \scriptsize (9.57) \\
  $95\%$ precision & 0.22 & 0.22  & 20.07 & 1.29\\
    & \scriptsize (1.77) & \scriptsize (1.61)  & \scriptsize (17.2) & \scriptsize (4.56)\\
  $99\%$ precision & 0.13 & 0.12  & 21.69 & 0.16\\
    & \scriptsize (0.68) & \scriptsize (0.64)  & \scriptsize (16.35) & \scriptsize (0.75)\\
\hline
\end{tabular}
\label{tbl:ordering_cutoff}
\end{table}

We take as an example the four simulation settings in Section \ref{sec:simulation} to demonstrate the power of this strategy. Table \ref{tbl:ordering_cutoff} displays the test set results using the LPR cutoff selected by the same criterion on the training set. We note that under favorable settings for HierLPR like settings 1, 2 and 4, it is possible to control the FDR level as desired. On the other hand, for unfavorable settings like setting 3, controlling FDR might be infeasible since no LPR value is associated with the required FDR in the training set. A reasonable alternative is to take as cutoff the LPR value associated with the maximal F-measure in the training set.

\section{Conclusions}\label{sec:hmc_conclusion}
In this article, we present a method for performing hierarchical multi-label classification, which is particularly well suited when labels follow a structure with a mix of broad (rather than deep) hierarchies and standalone nodes (rather than highly connected). Such structure is frequently encountered in the real world like disease diagnosis. Our method is developed with the intent of respecting the hierarchy and maximizing eAUC \eqref{eq:objfun}. It has been theoretically shown to optimize eAUC under the tree constraint and light assumptions on dependency between nodes. 

We provide two faster versions of HierLPR of $O(n \log n)$ complexity, one developed by reducing the redundant computations of the original algorithm and the other extended from CSSA. Furthermore, we provide an approach for selecting a cutoff for the final decision on the ranking list from HierLPR. The simulation study shows that this cutoff selection method can control FDR or optimize F-score. Finally, we study the working conditions of HierLPR using the synthetic data, and show its advantage over the competing method clusHMC on the real dataset (NCBI GEO). For these reasons, we recommend HierLPR as a computationally efficient, statistically driven approach for performing the second-stage decision adjustment in a local classification framework.

Despite the merits of HierLPR mentioned above, there remains large room for improvement. First, HierLPR is designed for the local method which trains classifiers for different tasks in isolation. It can be of great interest to use eAUC as an objective function to train classifiers while taking into account the graph hierarchy as the global method. Furthermore, LPR is in fact 1 - $\ell$fdr (local false positive rate), a metric widely used in hypothesis testing. A potential research direction is the extension of the ideas of eAUC and HierLPR from classification to hypothesis testing along hierarchies. More future directions include extending HierLPR to DAG structures, adjusting the weights eAUC to shift focus from top nodes to others, to name a few.

\newpage
\begin{appendices}
\section{The local precision rate} \label{appendix:hmc_notation}
\subsection{Problem setting and notation}
For consistency, we use the same notation as \cite{jiang2014}. Assume that classifiers have been learned for $K$ labels connected in an acyclic graph and that there are $M$ instances to be classified. We impose no requirements on class membership outside of being hierarchically consistent: an instance could belong to none of the classes, and those that do belong to a class are not required to have leaf-level membership.  

We assume that each label's classifier was trained on $\widetilde{M}$ instances and produces a score $s_{k, m}$ ($m = 1, \ldots M$, $k = 1\, \ldots, K$) that can be thresholded to produce label assignments: without loss of generality we take larger scores to indicate the positive class, i.e. all instances with $s_{k, m} > \lambda_k$ are said to have label $k$. For example, if a logistic regression is used for predicting label $k$, a standard choice for $s_{k,m}$ is the estimated posterior probability that instance $m$ belongs to label $k$. 

Our classification framework begins with the scores for each instance, for which we assume the following generative model. If $Q_{k,m}$ is a binary indicator for whether instance $m$ truly has label $k$, $Q_{k,m} = 1$ with probability $\pi_k$. We require that label membership implies membership in all of its ancestors: $P(Q_{Par(k),m} = 1 | Q_{k,m} = 1) = 1$, where $Par(k)$ is the parent of label $k$. Also, we assume conditional independence of labels at the same hierarchical level: if labels $k$ and $j$ share a parent $i$, $Q_{k,m}$ and $Q_{j,m}$ are independent conditional on the parent status $Q_{i,m}$. 

Given a threshold $\lambda_k$, the chance that the instance does not belong to the label $k$ is given by $F_k(\lambda_k) = P(s_{k,m} \le \lambda_k)$, the cumulative distribution function (cdf) for the scores of classifier $k$. This CDF can be expressed as a mixture of the score distributions for the two classes: $F_k = \pi_kF_{1,k} + (1 - \pi_k)F_{0,k}$, where $F_{1,k}$ is the CDF for those having the label $k$, and $F_{0,k}$ is the CDF for those without. Analogously, the respective density functions are denoted by $f_{1,k}$ and $f_{0,k}$, and the mixture density by $f_k$.

\subsection{Definition and optimality result}
\cite{jiang2014} developed the local precision rate with the intention of maximizing precision with respect to recall in the multi-label setting. Specifically, they maximized an expected population version of the micro-averaged precision and recall rate given by \cite{pillai2013}. The micro-averaged precision rate has the form $\frac{\sum_k TP_k}{\sum_k (TP_k + FP_k)}$, where $TP_k$ and $FP_k$ are the number of true and false positives for label $k$, respectively.

We can write the expected pooled precision and recall rate, we first write the expected precision of the classifier for class $k$ with threshold $\lambda_k$ as

\begin{equation}
\label{precision_fxn}
G_k(\lambda_k) = P(Q_{k,.} = 1 | s_{k,.} > \lambda_k) = \frac{\pi_k (1 - F_{1,k}(\lambda_k))}{1 - F_k(\lambda_k)}.
\end{equation}

From rearranging we also have that the joint probability $P(s_{k,.} > s \text{ and } Q_{k,.} = 1)$ is $(1 - F_k(s))G_k(F_k(s))$. 

Then, we can pool decisions across all $K$ labels using the thresholds $\lambda_1, \ldots, \lambda_k$ to obtain the expected pooled precision rate (ppr).

\begin{equation}
ppr = \frac{\sum_k (1 - F_k(\lambda_k))G_k(\lambda_k)}{\sum_k 1 - F_k(\lambda_k)}
\end{equation}

The denominator represents the \textit{a priori} expected number of times a given instance will be assigned to a label if the decision thresholds $\lambda_1, \ldots, \lambda_k$ are used. The pooled recall rate ($prr$) has the same form, except with $\sum_k Q_{k,.}$ as the denominator instead. 

\cite{jiang2014} observed that to maximize the expected pooled precision with respect to pooled recall, it was enough to maximize $\sum_k (1 - F_k(\lambda_k))G_k(\lambda_k)$ while holding $\sum_k 1 - F_k(\lambda_k)$ fixed since $\sum_k Q_{k,.}$ was a constant. The local precision rate (LPR) was then defined as

\begin{equation}
\label{LPR_def}
LPR_k(s) = -\frac{d}{dF_k(s)}\{(1 - F_k(s))G_k(s)\} = G_k(s) - (1 - F_k(s))\frac{dG_k(s)}{dF_k(s)}
\end{equation}

In their main theoretical result Theorem 2.1, they showed that if the LPR for each class is monotonic, then ranking the $KM$ LPRs calculated for each instance/class combination and thresholding the result produces a classification that maximizes the expected pooled precision with respect to a fixed recall rate. The monotonicity requirement is equivalent to having monotonicity in the likelihood of the positive class, and it is satisfied when higher classifier scores correspond to a greater likelihood of being from the positive class--this rules out poorly behaved classifiers, for example a multimodal case where the positive class scores lie in the range $[0, 0.3) \cup (0.7, 1]$, and the negative class scores in $[0.3, 0.7]$. 

\subsection{Connection to local true discovery rate}
\label{appendix:ltdr_connection}
After substituting expressions for the derivatives $\frac{dG_k(s)}{dF_k(s)} = \frac{dG_k(s)}{ds}\frac{ds}{dF_k(s)}$, the $LPR$ can be shown to be equivalent to the local true discovery rate, $l\text{tdr}$. 

\begin{align}
\label{LPR_ltdr_form}
LPR_k(s) &= G_k(s) - (1 - F_k(s))\frac{dG_k(s)}{dF_k(s)} \\
&= G_k(s) - (1 - F_k(s)) \left[\frac{\pi_k f_{1,k}(s)}{(1 - F_k(s))f_k(s)} + \frac{\pi_k (1 - F_{1,k}(s))}{(1 - F_k(s))^2} \right] \\
&= G_k(s) - \frac{\pi_kf_{1,k}(s)}{f_k(s)} - G_k(s) \\
&= \frac{\pi_kf_{1,k}(s)}{f_k(s)} = l\text{tdr}
\end{align}

The local false discovery rate, $l\text{fdr} = 1 - l\text{tdr}$ is its more well known relative; it has been studied extensively for Bayesian large-scale inference. This connection between a statistic used for hypothesis testing and the $LPR$, which was developed for classification, suggests the possibility that methodological developments on the $LPR$ in classification could have meaningful implications for statistical inference. We elaborate on this connection in Section \ref{appendix:LPR_inference}.

\subsection{Methods for estimating LPR} \label{appendix:hmc_estim_methods}
The optimality result in \cite{jiang2014} was derived using true $LPR$ values, which are generally unknown in practice. The authors discussed two methods for estimating the $LPR$. In the first method, estimates for $f_{0,k}$, $f_k$, and $\pi_k$ are plugged in after expressing $LPR_k(s)$ as the local true discovery rate. In the second method, a local quadratic kernel smoother is used to simultaneously estimate $G_k(s)$ and $G_k'(s)$ in the definition of LPR. 

Theoretically, \cite{jiang2014} showed that under certain conditions, the first method converges to the true result faster than the second. However, in simulation studies the second method performed better than the first. The difference is due to the difficulty in estimating the densities $f_{0,k}$ and $f_k$ on real data: any situation which would make kernel density estimation difficult would result in poor estimates of $l\text{tdr}$. For example, if the data are observed densely in one or two short intervals and sparsely elsewhere, the kernel density estimate of $f_k$ would have bumps in the sparse regions, making $l\text{tdr}$ unreliable.  Further, because $f_{0,k}$ and $f_k$ are estimated separately, they have different levels of bias and variance; in particular $f_{0, k}$ has larger variance (since it is only estimated from the negative class cases. In comparison, the functions $G_k(u)$ and $G_k'(u)$ are estimated jointly in the second method and $G_k(u)$ is always densely observed, as its domain is score percentiles rather than the scores themselves. 

For both estimation methods, it is possible to obtain LPR estimates that are negative. Users must adopt a heuristic to handle these cases.

\cite{lee2013} suggested an alternative based on the second method that averages the estimates obtained via a weighted spline fit on bagged samples of the data. The addition of weights and bagging were introduced in order to estimate the precision function more robustly in regions supported by less data.

\subsection{Connection to statistical inference}\label{appendix:LPR_inference}
The key distinction between inference and classification is the presence of training data, which allows users to estimate distributions that are assumed unknown in statistical inference. If we choose to ignore the available class distribution information, one can reframe a two-class classification problem as a hypothesis testing problem where the null corresponds to membership in the negative class. The classifier score could be used as a statistic, although this means that one would need to train the classifier on the available data while pretending that they cannot estimate the class distributions. This approach clearly fails to take full advantage of the available data, but is meant to highlight the connection between these two problems. 

The local precision rate is closely related to another statistic used in Bayesian large-scale inference, the local false discovery rate \citep{jiang2014}. The local false discovery rate was motivated by the insight that in large-scale inference, enough data is available to estimate class distributions with some accuracy. As a result, it is possible to use pointwise statistics based on $f_0(s)/f_1(s)$, which may contain more information than their more popular tail-probability counterparts. Most of the literature on this statistic has come from Bradley Efron, who laid the groundwork theory and provided interesting applications of the local false discovery rate in microarray gene expression experiments in \cite{efron2005, efron2007size, efron2012}. \cite{cai2009simultaneous} proved an optimality result similar to that of \cite{jiang2014} for a multiple inference procedure for grouped hypotheses that uses local false discovery rates: their procedure minimizes the false nondiscovery rate subject to a constraint on the false discovery rate.

Research on hierarchical hypothesis testing is limited but growing. \cite{yekutieli2006} first defined different ways to evaluate FDR when testing hypotheses along a tree, and gave a simple top-down algorithm for controlling these error types in \cite{yekutieli2008}. In that work, the hypotheses at each level of the tree were assumed independent. More recently, \cite{benjamini2014}'s work on selective inference provided an algorithm for testing on hypotheses arranged in families along a two-level tree where the parent and child are permitted to be highly dependent, although in so doing they give up control on the global FDR. Beyond the connection with the local false discovery rate, it remains to be seen whether other concepts from classification with LPRs can also be applied to inference. Most of the literature is concentrated on theoretical results that show that certain testing procedures can effectively bound a measure of Type I error. One possibility is that sorting algorithms with origins in computer science, like the one presented in this work, could have meaningful applications as testing procedures.

\section{Proofs of Theorems} \label{appendix:proof_theorems}
\subsection{Proof of Lemma \ref{lemma:first_subchain}}
\label{appendix:proof_lemma_first_subchain}
\begin{proof}\\
Let $a$ denote the average of these $t$ values. For the sake of simplicity, we further assume $p_1 = 1$ and simply denote by $Z_{(1)}, \ldots, Z_{(n - t)}$ the combination of $X_{(t+1)}, \ldots, X_{(m)}$, $Y_{(j)}'s$ and $W_{(l)}'s$. Let the ordering $\mathcal{O}_A$ be as follows:
  $$
  \begin{array}{ccccccccccc}
    Z_{(1)} &\ldots & Z_{(i_1 - 1)} &X_{(1)}& Z_{(i_1)}& \ldots& Z_{(i_{t}-t)} &X_{(t)}& Z_{(i_{t}-t+1)}& \ldots& Z_{(n - t)}\\
    1&\ldots&i_1 - 1& i_1 &i_1 + 1&\ldots&i_t - 1& i_t &i_t + 1&\ldots&n
  \end{array}
  $$
  where $i_c$ is the position of $X_{(c)}$, $c = 1, \ldots, t$. Note that $i_{c+1} \geq i_c + 1$. Denote by $\mathcal{O}_{A_1^{\prime}}$ the ordering of $(X_{(1)}, \ldots, X_{(t)})+\mathcal{O}_A/(X_{(1)}, \ldots, X_{(t)})$, that is, move $(X_{(1)}, \ldots, X_{(t)})$ to the head of $\mathcal{O}_A$. The difference in the value of the objective function (OF) between $\mathcal{O}_{A^{\prime}_1}$ and $\mathcal{O}_A$ can be written as follows:
    
    
\begin{align*}
  \text{OF of } \mathcal{O}_{A^{\prime}_1}  = & \sum_{i=1}^t (n-i+1) X_{(i)} + \sum_{j=1}^{n-t} (n-t-j+1) Z_{(j)}\\
 \text{OF of } \mathcal{O}_{A} = & (n-i_1+1)X_{(1)} + \ldots + (n-i_t+1)X_{(t)} \\
                      & + \sum_{j=1}^{i_1-1} (n-j+1) Z_{(j)} + \ldots + \sum_{j=i_{t}-t+1}^{n-t} (n-(j+t)+1) Z_{(j)}
\end{align*}
 
\begin{align*}
&\text{OF of } \mathcal{O}_{A^{\prime}_1} - \text{OF of } \mathcal{O}_{A} = \left[ (i_1 - 1)X_{(1)} - \sum_{k=1}^{i_1 -1} Z_{(k)} \right] + \ldots + \left[ (i_t - t) X_{(t)} - \sum_{k=1}^{i_t - t} Z_{(k)}\right] \\ 
& = (i_1 - 1)\left[ X_{(1)} - \frac{1}{i_1 -1} \sum_{k=1}^{i_1 - 1} Z_{(k)} \right] + \ldots + (i_t -t)\left[ X_{(t)} - \frac{1}{i_t -t}\sum_{k=1}^{i_t -t} Z_{(k)} \right] \\
& = (i_1 - 1)\left[ \left(X_{(1)} -a\right) + \left(a - \frac{1}{i_1 -1} \sum_{k=1}^{i_1 - 1} Z_{(k)} \right) \right] + \ldots \\
&\quad\quad\quad\quad + (i_t -t)\left[ \left(X_{(t)}  - a \right) + \left(a - \frac{1}{i_t -t}\sum_{k=1}^{i_t -t} Z_{(k)} \right) \right] \\
& = \left[ (i_1 - 1) \left(X_{(1)} - a \right) + \ldots + (i_t - t) \left(X_{(t)} - a \right) \right] \\
&\quad\quad\quad\quad + \left[ (i_1 - 1)\left(a - \frac{1}{i_1 -1} \sum_{k=1}^{i_1 - 1} Z_{(k)} \right) + (i_t - t)\left(a - \frac{1}{i_t -t}\sum_{k=1}^{i_t -t} Z_{(k)} \right) \right]
\end{align*}

It remains to prove both the first term and the second term on the right side are non-negative:
\begin{itemize}
  \item \textit{The first term}. We can rewrite the sum

\begin{equation*}
(i_1 - 1) \left(X_{(1)} - a \right) + \ldots + (i_t - t) \left(X_{(t)} - a \right)
\end{equation*}

as follows

\begin{equation*}
(i_1 - 1)\sum_{k=1}^t (X_{(k)} - a)\  +\   (i_2 - i_1 - 1)\sum_{k=2}^t \left(X_{(k)} - a\right) + \ldots + (i_t - i_{t-1} - 1) (X_{(t)} - a).
\end{equation*}

The first sum $\sum_{k=1}^t (X_{(k)} - a) = 0$ since $a$ is the average. The other sums being nonnegative follows from the fact that $a$ must be at least as large as the smaller averages in the chain, i.e. $a \ge \frac{1}{c}\sum_{k=1}^c X_{(k)}$ where $1 \le c \le t$. In detail, we know that
\begin{align*}
X_{(c+1)} + \ldots + X_{(t)} &= ta - [X_{(1)} + \ldots + X_{(c)}], \quad 1 \le c \le t-1 \\
& \ge ta - ca = (t-c)a, \quad \text{from the fact above} \\
(X_{(c+1)} - a) + \ldots + (X_{(t)} - a) &\ge 0
\end{align*}

Therefore, each sum
\begin{equation}
  \sum_{k=c}^t \left(X_{(k)} - a\right) \ge 0,~~~c = 1, \ldots, t.
  \label{eq:backwards_average}
\end{equation}
It is clear that the expression 

\begin{equation*}
(i_1 - 1)\sum_{k=1}^t (X_{(k)} - a)\  +\   (i_2 - i_1 - 1)\sum_{k=2}^t \left(X_{(k)} - a\right) + \ldots + (i_t - i_{t-1} - 1) (X_{(t)} - a)
\end{equation*}

is exactly zero only when each $X_{(c)} = a$.  

\item \textit{The second term}. We claim that each term in the expression
  $$\left[ (i_1 - 1)\left(a - \frac{1}{i_1 -1} \sum_{k=1}^{i_1 - 1} Z_{(k)} \right) + (i_t - t)\left(a - \frac{1}{i_t -t}\sum_{k=1}^{i_t -t} Z_{(k)} \right) \right]$$
  must be nonnegative, and equality holds only if there is a tie. To see this, we notice that $\sum_{k = 1}^{i_c - c} Z_{(k)}$ can be separated as three sums: $\sum_{k = t+1}^{t_X} X_{(k)}$, $\sum_{k = 1}^{t_Y} Y_{(k)}$ and $\sum_{k = 1}^{t_W} W_{(k)}$, where $t_X \leq m$, $t_Y \leq m^{\prime}$, $t_W \leq n^{\prime}$ and $c = t_X - t + t_Y + t_W$. In terms of the procedure of Algorithm \ref{algo:chain_merge}, it follows that
  $$\sum_{k = 1}^{t_Y} Y_{(k)} \leq t_Y\cdot a ~~~~~\text{and}~~~~~ \sum_{k = 1}^{t_W} W_{(k)} \leq t_W\cdot a.$$
  For the same reason, $\sum_{k = t+1}^{t_X} X_{(k)} \leq (t_X - t)a$, otherwise we have $\frac{1}{t_X}\sum_{k = 1}^{t_X} X_{(k)} > a$ and it violates the condition that $a$ is the average of the first maximal chain branch. So we have
  $$\frac{1}{i_c - c} \sum_{k = 1}^{i_c -c} Z_{(k)} = \frac{1}{i_c - c} [\sum_{k = t+1}^{t_X} X_{(k)} + \sum_{k = 1}^{t_Y} Y_{(k)} + \sum_{k = 1}^{t_W} W_{(k)}] \leq a.$$
\end{itemize}
\end{proof}

\subsection{Proof of Theorem \ref{thm:equiv_algo}}
\label{appendix:proof_theorem_equiv_algo}
\begin{proof}
    To establish the bridge between Algorithm \ref{algo:original_algo} and Algorithm \ref{algo:equiv_algo}, we start from a simple case, i.e., a tree consisting of multiple chains with the same root (the root is in $\mathcal{P}$). Specifically, denote by $R$ the root and these children chain by $C_s := \{X_1^{(s)}, \ldots, X_{k_s}^{(s)}\}$, $s = 1, \dots, \nu$, where $\nu$ is the number of chains, and $k_s$ is the length of the $s_{th}$ chain. Without loss of generality, suppose $S_1 := C_1(i_1) = \{X_1^{(1)}, \dots, X_{i_1}^{(1)}\}$ is the first supernode that has been condensed to $R$, if $R$ has not been taken off, or the first to be taken off after $R$. We claim that
\begin{lemma}\label{lemma:first_S1}
  When merging $C_1, \ldots, C_\nu$, Algorithm \ref{algo:chain_merge} puts $S_1$ in the first place.
\end{lemma}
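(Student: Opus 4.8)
The plan is to reduce the claim to a single statement about the maximal-average prefix and then read off both algorithms' behaviour from it. By construction, the first block selected by Algorithm~\ref{algo:chain_merge} when fed $C_1,\dots,C_\nu$ is $\argmax_{C_r(i)}\bar\ell_{r,i}$, i.e. the prefix $P^\ast := C_{s^\ast}(i^\ast)$ attaining the global maximal prefix average $a^\ast := \max_{s,i}\bar\ell_{s,i}$ taken over all the children chains. Hence Lemma~\ref{lemma:first_S1} is equivalent to showing that the supernode $S_1$ produced by Algorithm~\ref{algo:equiv_algo} --- the first one condensed into $R$, or the first removed after $R$ --- is exactly $P^\ast$. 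So after noting that Chain-Merge selects $P^\ast$ by definition, the entire content of the lemma is the identity $S_1 = P^\ast$.

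Next I would establish two structural facts about Algorithm~\ref{algo:equiv_algo}, writing $\mathrm{avg}(S):=\frac{1}{n(S)}\sum_{j\in S}LPR_j$. (i) The global maximum of the supernode averages is non-increasing along the run: condensing the current maximiser $S$ with $Par(S)$ replaces $\mathrm{avg}(S)$ by a weighted mean of $\mathrm{avg}(S)$ and $\mathrm{avg}(Par(S))\le\mathrm{avg}(S)$, while removing a root maximiser merely deletes the current maximum; in both cases no average can rise above the previous maximum. (ii) A chain-prefix supernode $C_s(i)$ (one whose parent is $R$) can be condensed into $R$ only while it is the current global maximiser, so at that moment the running maximum equals $\mathrm{avg}(C_s(i))=\bar\ell_{s,i}\le a^\ast$. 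Combining (i) and (ii) shows that $R$ is never touched by a chain-prefix while the running maximum strictly exceeds $a^\ast$. Finally, because the pooling operations internal to a single chain are confluent (the merges commute and are unaffected by what happens in the other chains or at $R$), the front block of chain $s^\ast$ coalesces into precisely $C_{s^\ast}(i^\ast)$ with average $a^\ast$, mirroring the single-chain analysis already underlying Algorithm~\ref{algo:chain_merge}.

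With these facts I would split on the position of $LPR_R$ relative to $a^\ast$. If $a^\ast\ge LPR_R$, then as the running maximum descends to level $a^\ast$ the front block of chain $s^\ast$ has already (or simultaneously) formed $P^\ast$; since $a^\ast=\max_s\big(\max_i\bar\ell_{s,i}\big)$ dominates the front-block average of every other chain as well as $LPR_R$, the block $P^\ast$ is the first chain-prefix to attain global-maximiser status, so it condenses into $R$ before any competitor and before $R$ itself is removed, giving $S_1=P^\ast$. If instead $a^\ast<LPR_R$, fact (ii) forbids any chain-prefix from reaching $R$ while the maximum exceeds $a^\ast$, so $R$ becomes the unique maximiser first and is removed; the chains then form an independent forest whose first finalised block is the front block of the chain with the largest max-prefix average, namely $P^\ast$ again. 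Either way $S_1=P^\ast$, which is exactly what Chain-Merge outputs first. The step I expect to be the main obstacle is the timing argument in the first case: ruling out that some lower-average chain-prefix ``cuts in line'' and touches $R$ before $P^\ast$ has finished coalescing. This is precisely where the monotonicity of (i), the prefix-average bound of (ii), and the confluence of the within-chain pooling must be combined carefully, together with a tie-breaking convention when several prefixes share the average $a^\ast$.
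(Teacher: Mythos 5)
Your reduction is the same as the paper's: since Algorithm~\ref{algo:chain_merge} by construction selects $\argmax_{s,i}\bar\ell_{s,i}$ first, the lemma is equivalent to showing that the supernode $S_1$ produced by Algorithm~\ref{algo:equiv_algo} attains the global maximal prefix average $a^\ast$. Your two scaffolding facts are also correct and are implicitly what the paper uses: the running maximum of supernode averages is non-increasing, and a chain prefix is condensed into $R$ only at the moment it is the global maximizer.

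The gap is the step you yourself flag as the main obstacle: the claim that ``the front block of chain $s^\ast$ coalesces into precisely $C_{s^\ast}(i^\ast)$'' before any chain prefix touches $R$. As stated, this is essentially the lemma itself restricted to chain $s^\ast$, and invoking ``confluence of the within-chain pooling'' does not discharge it --- confluence tells you the interleaving across chains does not matter, but not that the front supernode stops growing at exactly $i^\ast$ rather than at some shorter or longer prefix. What closes this hole in the paper is the backwards-average inequality \eqref{eq:backwards_average}: any suffix segment ending at the endpoint of a maximal-average prefix has average at least that prefix's average. From this the paper derives three inequalities directly about the $S_1$ that the algorithm actually produces (whatever it is): no longer prefix of the same chain beats $\bar\ell_{1,i_1}$ (because at condensation time $S_1$ is the global max, so every supernode behind it has a smaller average); no shorter prefix beats it (otherwise, by \eqref{eq:backwards_average}, the supernode ending at that shorter prefix's endpoint could only have been absorbed by a supernode of at least that average, forcing $\bar\ell_{1,i_1}$ to be at least as large --- a contradiction); and no prefix of another chain beats it (otherwise that chain would always contain a supernode with a strictly larger average, contradicting that $S_1$ reached $R$ first). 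Reorganizing your argument around these three inequalities --- i.e., proving properties of the $S_1$ the algorithm outputs rather than tracking which block ``wins the race'' to $R$ --- also removes the need for your case split on $LPR_R$ versus $a^\ast$ and the tie-breaking worries, neither of which the paper's argument requires.
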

To show Lemma \ref{lemma:first_S1}, we only need to show $\frac{1}{i_1}\sum_{k \in C_1(i_1)} LPR_{k} \geq \frac{1}{i}\sum_{k \in C_s(i)}LPR_k$, $\forall~i \in \{1, \ldots, k_s\}, s \in \{1, \ldots, \nu\} $. The detailed proof is deferred to Appendix \ref{appendix:proof_lemma_first_S1}. Inductively, it implies that the ordering given by Algorithm \ref{algo:equiv_algo} on such simple case is the same as Algorithm \ref{algo:original_algo}. Therefore, any complicated structure boils down to the above simple case, since we can inductively merge the sub-chains starting from a root in $\mathcal{P}$ using Algorithm \ref{algo:chain_merge}. This completes the proof showing that the results of Algorithm \ref{algo:original_algo} and Algorithm \ref{algo:equiv_algo} are the same.
\end{proof}

\subsection{Proof of Lemma \ref{lemma:first_S1}}
\label{appendix:proof_lemma_first_S1}
\begin{proof}
We show the proof in three steps:
\begin{itemize}
\item [(i)] Along the chain $C_1$, all the sub-chains starting from $X_1^{(1)}$ with larger length than $S_1$ have at most as large average LPR as $S_1$, that is,  $\bar \ell_{1, i} \leq \bar \ell_{1, i_1}$, $\forall i_1 < i \leq k_1$. In terms of the procedure of Algorithm \ref{algo:equiv_algo}, all the mean LPR values in the supernodes behind $S_1$ is no larger than $\ell_{1, i_1}$, so the argument (i) holds.
\item [(ii)] Along the chain $C_1$, all the sub-chains starting from $X_1^{(1)}$ with smaller length than $S_1$ have at most as large average LPR as $S_1$, that is,  $\bar \ell_{1, i} \leq \bar \ell_{1, i_1}$, $\forall 1 \leq i < i_1$. Otherwise, suppose $i_1^{\prime} < i_1$ s.t. $\bar \ell_{1, i_1^{\prime}} > \bar \ell_{1, i_1}$. By Eq. \ref{eq:backwards_average}, we know that $\sum_{i = c}^{i_1^{\prime}} (X_i^{(1)} - \ell_{1, i_1^{\prime}}) \geq 0$, $c = 1, \ldots, i_1^{\prime}$. So to make  any supernode right behind the one ending with $X_{i_1^{\prime}}^{(1)}$ merged with its former supernode, the average LPR value of the former must be at least $\ell_{1, i_1^{\prime}}$. Thus, we can inductively conclude that $\bar \ell_{1, i_1} \geq \bar \ell_{1, i_1^{\prime}}$, which is a contradiction.
\item [(iii)] $\bar \ell_{s, i} \leq \bar \ell_{1, i_1}$, $\forall i \in \{1, \ldots, k_s\}, s \in \{1, \ldots, \nu\}$. Otherwise, wlog, suppose $\bar \ell_{2, i_2} > \bar \ell_{1, i_1}$. By Eq. \ref{eq:backwards_average}, any super node ending with $X_{i_2}^{(2)}$ has average LPR at least $\bar \ell_{2, i_2}$. Then it contradicts with the assumption that $S_1$ is the first supernode that will be merged with $R$, if $R$ has not been taken off, or by the time $S_1$ is taken off.
\end{itemize}
\end{proof}

\section{A faster implementation of HierLPR} \label{appendix:faster_algo}
In Algorithm \ref{algo:chain_merge}, we note the fact that each subchain in the tree can be partitioned into multiple blocks --- given a chain $C_r$, the breaking points are sequentially defined as
\begin{equation}
  \label{eq:breaking_points}
  p_j := \left \{
  \begin{array}{ll}
    \max_{1 \leq i \leq |C_r|} \frac{1}{C_r(i)} \sum_{k \in C_r(i)} LPR_k, & \text{~if~} j == 1\\
    \max_{p_{j - 1} < i \leq |C_r| } \frac{\sum_{k \in C_r(i)/C_r(p_{j-1})} LPR_k}{|C_r(i)| - |C_r(p_{j-1})|}, & \text{~if~} j > 1\\
  \end{array}
  \right .
\end{equation}
For example, Figure \ref{fig:faster_illustration} (i) shows a chain of $6$ nodes can be partitioned into two blocks. During the merging procedure of Algorithm \ref{algo:original_algo}, it turns out that the blocks defined by the above partitions will not be broken into smaller pieces, but can be further agglomerated. To show this, suppose there are two consecutive blocks in a chain, $B_1$, $B_2$, and $B_1$ locates ahead of $B_2$. Now we reform the blocks from the nodes in $B_1$ and $B_2$, using the rule in \eqref{eq:breaking_points}. It is obvious that nodes in $B_1$ will be clustered together. It remains to see which nodes in $B_2$ will be clustered with the nodes in $B_1$. Denote by $B_2(i)$ a sub-block consisting of the first $i$ nodes in $B_2$, and by $\ell_{B} = \frac{1}{|B|} \sum_{k \in B} LPR_k$ given a block $B$. Then, the average LPR of the nodes in $B_1$ and the first $i$ nodes in $B_2$ is computed as:
\begin{eqnarray}
  \ell_{B_1 \cup B_2(i)} &=& \frac{|B_1|\ell_{B_1} + i \ell_{B_2(i)}}{|B_1| + i} 
                         = \ell_{B_1}+\frac{\ell_{B_2(i)}-\ell_{B_1}}{|B_1|/i + 1}.\label{eq:block_LPR}
\end{eqnarray}
By the definition of block $B_2$, we have $\ell_{B_2} \geq \ell_{B_2(i)}$, $\forall i=1, \ldots, |B_2|$. If $\ell_{B_1} > \ell_{B_2}$, none of the nodes in $B_2$ will be clustered together will the nodes in $B_1$. If $\ell_{B_1} \leq \ell_{B_2}$, \eqref{eq:block_LPR} shows that all the nodes of $B_1$ and $B_2$ will form a new block. Therefore, blocks will not be broken into pieces, but can be further agglomerated. During the merging of multiple chains whose roots have the same parent, no blocks will be agglomerated since the blocks are sorted in a descending way along the merged chain; see the three descendant blocks of the bold block in Figure \ref{fig:faster_illustration} (ii). On the contrary, blocks can be agglomerated with those from the parent chain. Figure \ref{fig:faster_illustration} (iii) shows that after the chain merging, the blocks in the merged chain can be further agglomerated with the parent block (the bold one).

These observations motivate us to propose Algorithm \ref{algo:faster_algo}, a faster version of Algorithm \ref{algo:original_algo}. We avoid repeatedly computing moving averages by partitioning each chain into blocks, storing the size and the average of each block. Specifically, there are three new components we need for Algorithm \ref{algo:faster_algo}:
  \begin{itemize}
  \item \textbf{Detect breaking points.} For a chain $C_r$, breaking points can be detected by \eqref{eq:breaking_points}. Many existing algorithms can be used to this end. For example, recursion leads to an $\mathcal{O}(|C_r| \log |C_r|)$ time complexity. Figure \ref{fig:faster_illustration} (i) illustrates this step.
  \item \textbf{Merge multiple chains with defined blocks}. Merging $m$ multiple chains with detected blocks can be realized using the k-way merge algorithm. The time complexity is $\mathcal{O}(s\log m)$, where $s$ is the total number of blocks in these chains. Figure \ref{fig:faster_illustration} (ii) illustrates this step using a tree of five blocks.
  \item \textbf{Agglomerate the upstream chain and the downstream merged chain.} For a node $v \in \mathcal{P}$, denote by $C^{(v)}$ the chain ends with the node $v$. Suppose the children of $v$ as $r_1, \ldots, r_{H}$. Denote by $C_{v}$ the chain output by merging $C_{r_1}, \ldots, C_{r_H}$ using the k-way merge algorithm. Denote the blocks of $C^{(v)}$ by $B_1^{(up)}, \ldots, B_s^{(up)}$ and the blocks of $C_v$ by $B_1^{(down)}, \ldots, B_t^{(down)}$. Algorithm \ref{algo:agglo_up_down} agglomerates the blocks of $C^{(v)}$ and $C_v$ with a time complexity of $\mathcal{O}(|C^{(v)}| + |C_v|)$. Figure \ref{fig:faster_illustration} (iii) illustrates this step using the output of Figure \ref{fig:faster_illustration} (ii).
  \end{itemize}

  \begin{figure}
    \centering
    \includegraphics[width=5.5in, height = 7.2in]{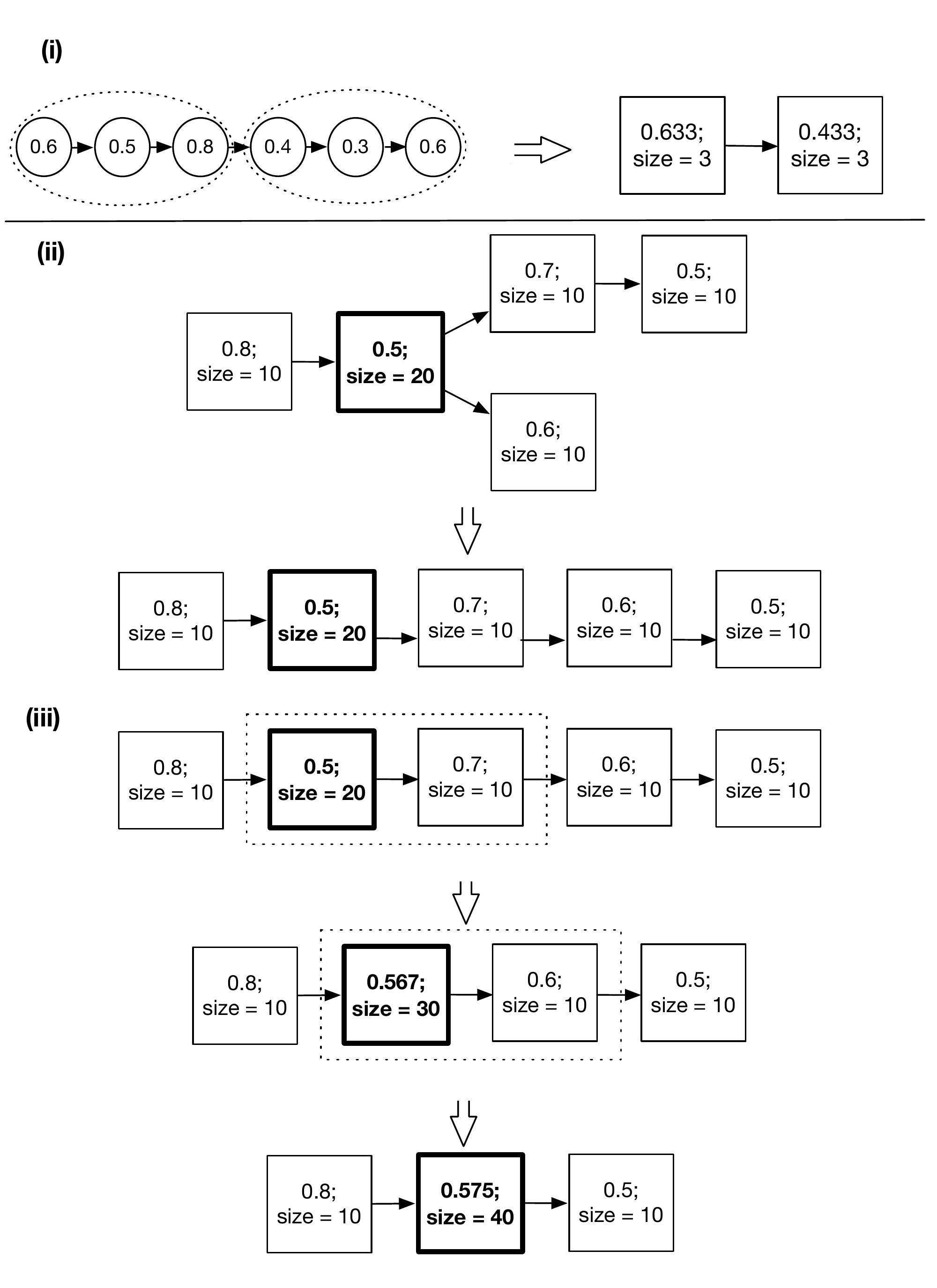}
    \caption{Illustrating the three components in Algorithm \ref{algo:faster_algo} using two examples which are separated by the solid line. The first example starts from a tree of six nodes and the second example starts from a tree of five blocks. (i) Detect breaking points of the chain of six nodes and partition them into two blocks. (ii) Merge the two children chains of the bold block. (iii) Agglomerate the upstream chain and the downstream chain around the bold node. The final list of blocks are positioned in a descending way.}
    \label{fig:faster_illustration}
  \end{figure}
  
  Throughout Algorithm \ref{algo:faster_algo}, the total time complexity consists of three parts: 1) detecting breaking points requires $\mathcal{O}(n\log K )$ computations; 2) merging multiple chains with defined blocks requires $\mathcal{O}(Dn\log K + n\log M)$ computations, where $D$ is the number of nodes that have multiple children in the graph for one instance. The quantity $D$ upper bounds the number of times each subchain merges during the algorithm; 3) agglomerating the upstream chain and the downstream merged chain requires $\mathcal{O}(Dn)$ computations. In total, the time complexity of Algorithm \ref{algo:faster_algo} is $\mathcal{O}(Dn\log K)$. In reality, most tree structures are shallow with $D < 10$. For example, the $D = 6$ and $D = 5$ in Figure \ref{disease_diag_graph1} and Figure \ref{disease_diag_graph2} respectively. So our algorithm is actually of $\mathcal{O}(n \log K)$ runtime for practical use.

\begin{algorithm}[H]
\caption{Agglomerate the blocks in the upstream chain and the downstream chain}\label{algo:agglo_up_down}
\textbf{Input: }{Blocks $B_1^{(up)}, \ldots, B_s^{(up)}$ from the upstream chain $C^{(v)}$ and Blocks $B_1^{(down)}, \ldots, B_t^{(down)}$ from the downstream chain $C_v$.}\\
\textbf{Procedure: }
\begin{algorithmic}[1]
  \STATE Let $b_0$ be $B_1^{(down)}$, $b_{-1}$ be the block ahead of it in $C^{(v)}$ and $b_{+1}$ be the block after it. Denote by $\ell_{b_0}, \ell_{b_{-1}}, \ell_{b_{+1}}$ the averaging LPR within $b_0$, $b_{-1}$ and $b_{+1}$ respectively.
  \WHILE{$\ell_{b_0} > \ell_{b_{-1}}$ or $\ell_{b_{+1}} > \ell_{b_0}$}
  \IF{$\ell_{b_0} > \ell_{b_{-1}}$}
  \STATE Agglomerate $b_0$ and $b_{-1}$. The new block is still called $b_0$ and the block ahead of it now is called $b_{-1}$.
  \ELSE
  \STATE Agglomerate $b_0$ and $b_{+1}$. The new block is still called $b_0$ and the block after it now is called $b_{+1}$.
  \ENDIF
  \ENDWHILE
\end{algorithmic}[1]
\textbf{Output: }{The new sequence of blocks}.
\end{algorithm}

\begin{algorithm}[H]
\caption{A faster implementation of the HierLPR algorithm.}\label{algo:faster_algo}
\textbf{Input: }{A forest $\mS$}\\
\textbf{Procedure: }
\begin{algorithmic}[1]
\STATE Figure out $\mathcal{P}$.
\WHILE{$\mathcal{P} \neq \emptyset$}
\STATE  Pop out a $v$ from $\mathcal{P}$. Take out all of its children $r_1, \ldots, r_{H}$. These children's descendants have at most one child. Denote the chain ends with the node $v$ as $C^{(v)}$.
\STATE  Find the breaking points $p_1^{(h)}, \ldots, p_{S_{h}}^{(h)}$ for $C_{r_h}$ by \eqref{eq:breaking_points}, $h = 1,\ldots, H$.
\STATE Merge $C_{r_1}, \ldots, C_{r_H}$, in terms of the averaging LPRs of the blocks separated by the breaking points. Denote the new chain as $C_v$.
\STATE Agglomerate blocks of $C^{(v)}$ and $C_v$ by Algorithm \ref{algo:agglo_up_down}.
\ENDWHILE
\IF{There remain multiple chains}
\STATE Mergethem use the k-way merge algorithm. 
\ENDIF
\STATE Let $\mL$ be the resulting chain.
\end{algorithmic}[1]
\textbf{Output: }{a sorted list $\mL$}.
\end{algorithm}

\section{More discussion on CSSA and Algorithm \ref{algo:equiv_algo}} \label{appendix:CSSA_discussion}
Algorithm \ref{algo:bi2011_relaxed} solves the optimization problem \eqref{opt:bi2011_relaxed} stated as below
\begin{eqnarray}
\max_{\Psi}&& \sum_{k \in \mathcal{T}} B(k) \Psi(k)\label{opt:bi2011_relaxed}\\
s.t.&& \Psi(k) \geq 0, \forall k, \quad \Psi(0) = 1, \sum_{k \in \mathcal{T}} \Psi(k) \leq L.  \nonumber\\
&&\Psi \text{ is }\mathcal{T}\text{-nonincreasing},\nonumber
\end{eqnarray}
where $\mathcal{T}$-nonincreasing means that $\Phi(k) \leq \Phi(k')$ if node $k'$ is the ancestor of node $k$. Algorithm \ref{algo:bi2011_relaxed} seems similar to Algorithm \ref{algo:equiv_algo} in spirit, and both results are almost the same except for the end behavior. However, due to the subtle distinction between the objective function \eqref{opt:bi2011_relaxed} and \eqref{eq:objfun}, the theoretical results for CSSA in \cite{bi2011} cannot be used to derive the optimality for eAUC. The specific reasoning is stated as below.

We first revisit the objective function in \eqref{opt:bi2011_original}. Note that Algorithm \ref{algo:bi2011_relaxed} has a property that $\Psi(k) = 1$ for $L$ implies $\Phi(k) = 1$ for $L'$ when $L < L'$. It implies this algorithm produces a ranking list regardless of the choice of $L$. If this property also holds for the solution to \eqref{opt:bi2011_original}, summing up these objective function values over $L = 1, \ldots, n$ maximizes \eqref{eq:objfun} (replace $B(k)$ with $LPR_k$). Unfortunately, this strategy fails on the objective function \eqref{opt:bi2011_relaxed}, that is, the weight for the $k$-th node can be either smaller or larger than $(n - k + 1)$ by summing up the solutions to \eqref{opt:bi2011_relaxed}. For example, if the last selected supernode $n(S_i)$ in Algorithm \ref{algo:bi2011_relaxed} contains $n(S_i) > L - \Gamma$, then $\Psi(S_i) = (L - \Gamma)/n(S_i) < 1$. The relaxation constraint \eqref{const:relaxed} disables the direct extension of what have been proved in \cite{bi2011} to the optimality of eAUC for Algorithm \eqref{algo:equiv_algo}. Therefore, our analysis provides a novel insight for CSSA. By showing that Algorithm \eqref{algo:original_algo} and it is equivalent to Algoirthm \eqref{algo:equiv_algo} in terms of the produced sorting list, we connect CSSA to the optimality of eAUC.

\begin{algorithm}[H]
  \caption{The CSSA algorithm}\label{algo:bi2011_relaxed}
  \textbf{Input: }A forest $\mathcal{S}$\\
  Denote $Par(S_i)$ as the parent of supernode $S_i$, $n(S_i)$ as the number of nodes in $S_i$, and $\Psi$ as a vector indicating which nodes are selected.\\
  \begin{algorithmic}[1]
    \STATE Initialize $\Psi(0) \leftarrow 1$; $\Gamma \leftarrow 1$.
    \STATE Initialize all other nodes as supernodes with $\Psi(k) \leftarrow 0$ and sort them according to the LPR value.
    \WHILE{$\Gamma < L$}
    \STATE Find $i = \argmax_i\ \frac{1}{n(S_i)} \sum_{j \in S_i} LPR_j$ 
    \IF{$\Psi(Par(S_i)) = 1$}
    \STATE  $\Psi(S_i) \leftarrow \min \{1, (L - \Gamma)/n(S_i)\}$
    \STATE  $\Gamma \leftarrow \Gamma + n(S_i)$
    \ELSE
    \STATE Condense $S_i$ and $Par(S_i)$ as a new supernode.
  \ENDIF
  \ENDWHILE
\end{algorithmic}
\textbf{Output: }Vector $\Psi$.
\end{algorithm}

\section{Experiments}\label{appendix:experiments}
\subsection{Background of real data}\label{appendix:real_data_background}
\cite{huang2010} developed a classifier for predicting disease along the Unified Medical Language System (UMLS) directed acyclic graph, trained on public microarray datasets from the National Center for Biotechnology Information (NCBI) Gene Expression Omnibus (GEO). GEO was originally founded in 2000 to systematically catalog the growing volume of data produced in microarray gene expression studies. 
GEO data typically comes from research experiments where scientists are required to make their data available in a public repository by a grant or journal guidelines. At of July 2008, GEO contained 421 human gene expression studies on the three microarray platforms that were selected for analysis (Affymetrix HG-U95A (GPL91), HG-U133A (GPL96), and HG-U133 Plus 2 (GPL570)). Briefly, 100 studies were collected, yielding a total of 196 datasets. These were used for training the classifier in \cite{huang2010}. 

Labels for each dataset were obtained by mapping text from descriptions on GEO to concepts in the UMLS, an extensive vocabulary of concepts in the biomedical field organized as a directed acyclic graph. The mapping resulted in a directed acyclic graph of 110 concepts matched to the 196 datasets  at two levels of similarity -- a match at the GEO submission level, and a match at the dataset level, with the latter being a stronger match. The disease concepts and their GEO matches are listed in Table S2 in the supplementary information for \cite{huang2010}. 

Training a classifier for each label is a complex multi-step process, and is described in detail in the Supplementary Information of \cite{huang2010}. We summarize that process here. In the classifier for a particular disease concept, the negative training instances were the profiles among the 196 that did not match with that disease concept. The principal modeling step involved expressing the posterior probability of belonging to a label in terms of the log likelihood ratio and some probabilities that have straightforward empirical estimates. The log likelihood ratio was modeled with a log-linear regression. A posterior probability estimate was then obtained for each of the 110 $\times$ 196 instances in the data by leave-one-out cross-validation, i.e. estimating the $i$-th posterior probability based on the remaining ones. These posterior probability estimates were used as the first-stage classifier scores. An initial label assignment for the first-stage was then obtained by finding the optimal score cutoffs for each classifier.

\subsection{Characteristics of the disease diagnosis data and hierarchy}\label{appendix:char_real_data}
The full hierarchy graph is partitioned into two parts as shown in Figures \ref{disease_diag_graph1} and \ref{disease_diag_graph2} respectively. As the two figures show, the 110 nodes are grouped into 24 connected sets. In general, the graph is shallow rather than deep: the maximum node depth is 6, though the median is 2. Only 10 nodes have more than one child. This occurs because 11 of the connected sets are standalone nodes, while six are simple two-node trees. The two largest sets consist of 28 and 30 nodes, respectively.

\begin{figure}[h!]
\centering
\caption{Structure of the disease diagnosis dataset, part 1 of 2. The colors correspond to node quality: white indicates that a node's base classifier has AUC between $(0.9, 1]$; light grey, $(0.7, 0.9]$, dark grey, $(0, 0.7]$. The values inside the circles indicate the number of positive cases, while the value underneath gives the maximum percentage of cases shared with a parent node.}
\label{disease_diag_graph1}
\includegraphics[width=4.5in]{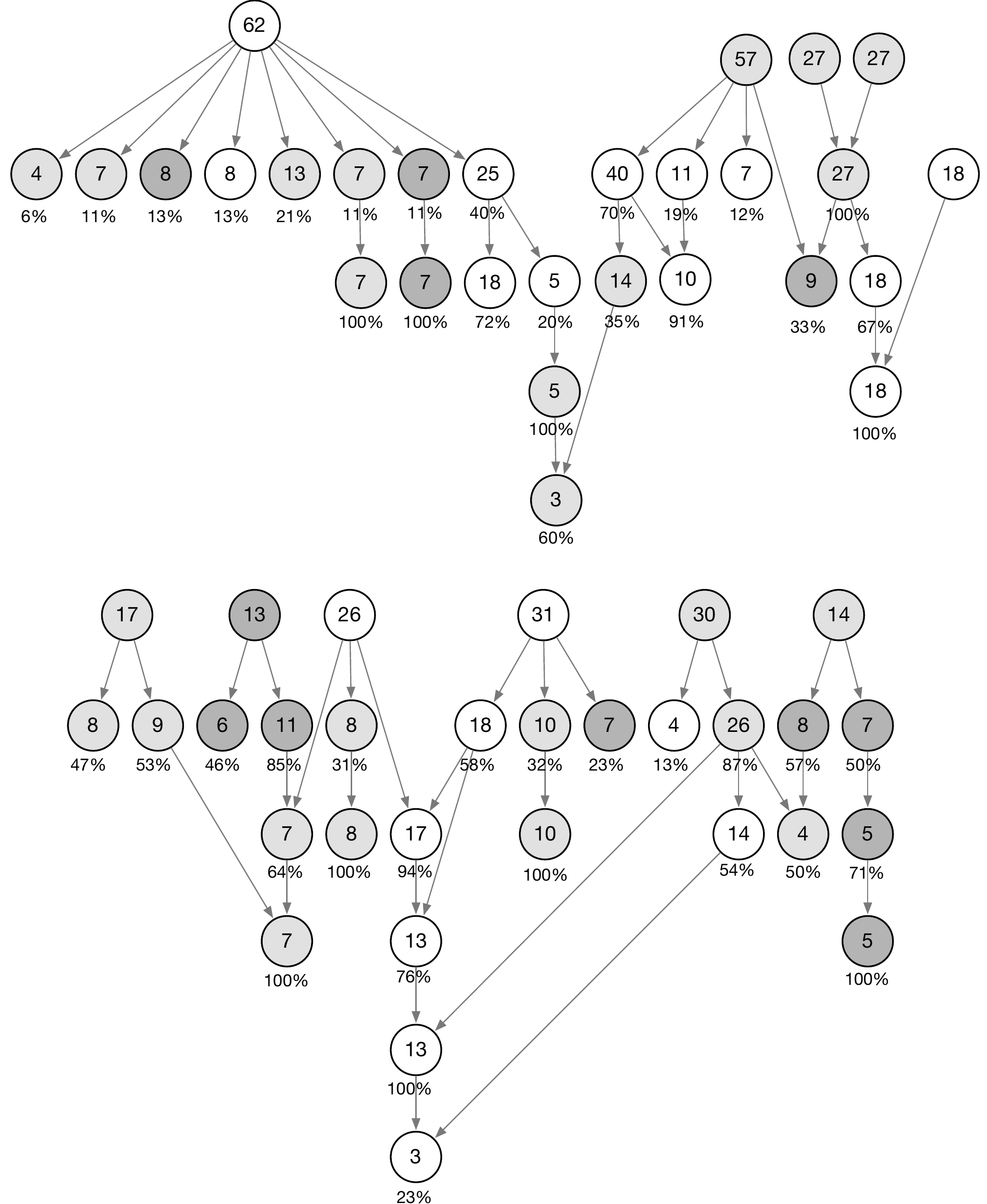}
\end{figure}

\begin{figure}[h!]
\centering
\caption{Structure of the disease diagnosis dataset, part 2 of 2. The colors correspond to node quality: white indicates that a node's base classifier has AUC between $(0.9, 1]$; light grey, $(0.7, 0.9]$, dark grey, $(0, 0.7]$. The values inside the circles indicate the number of positive cases, while the value underneath gives the maximum percentage of cases shared with a parent node.}
\label{disease_diag_graph2}
\includegraphics[width=4.5in]{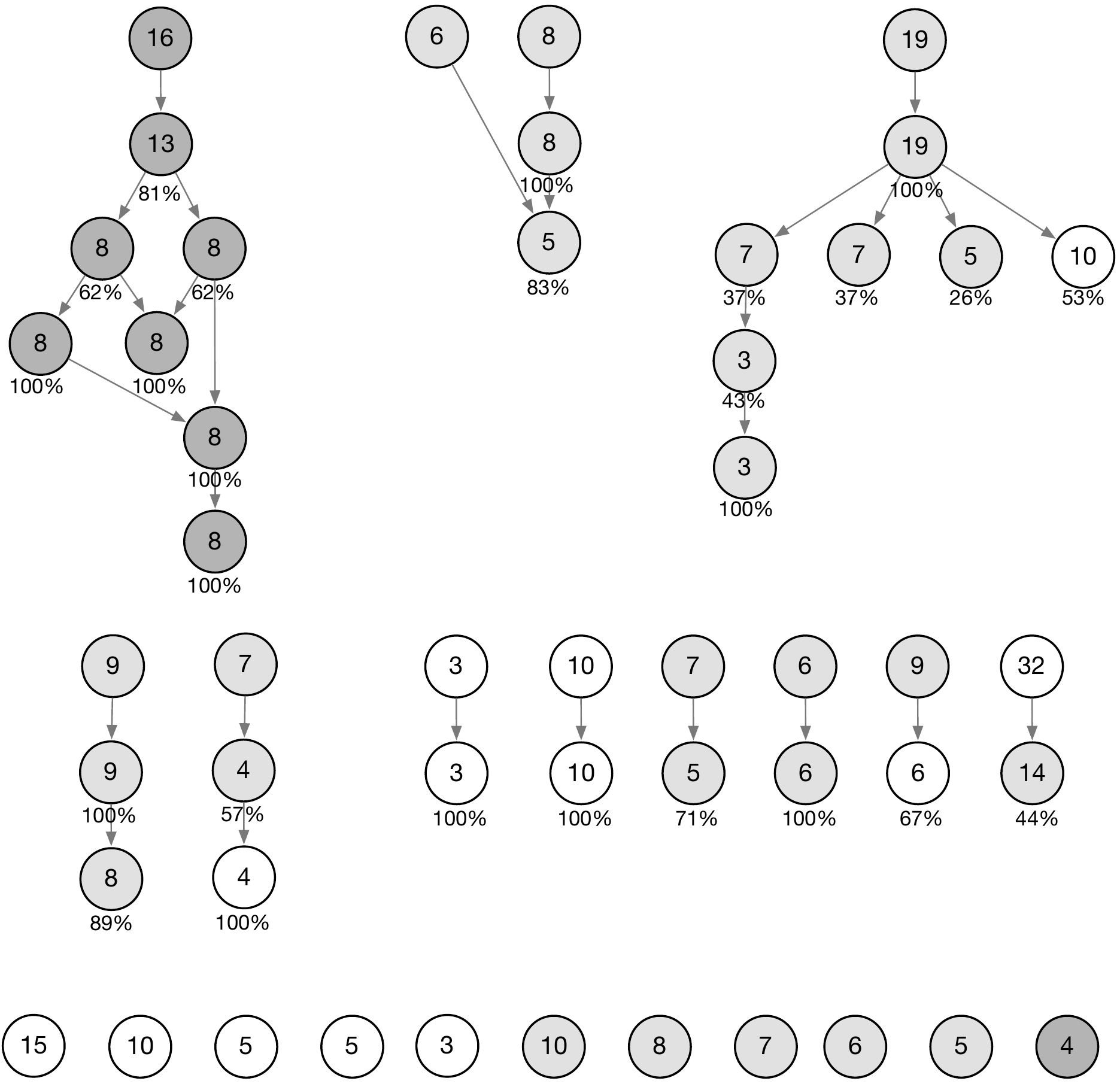}
\end{figure}
The graph nearly follows a tree structure. Most nodes have only one parent or are at the root level. Only 15 nodes have 2 parents, and 2 nodes have 3 parents. Most nodes do not have a high positive case prevalence. The largest number of samples belonging to a label is 62, or a 32.63\% positive case prevalence. The average prevalence is 5.89\%, with a minimum prevalence of 1.53\%, corresponding to 3 cases for a label. Data redundancy occurs as an artifact of the label mining: usually, the positive instances for a disease concept are the same as for its ancestors. There are few datasets that are tagged with a general label and not a leaf-level one. Twenty six nodes or 23.64\% of all nodes share the same data as their parents, so they have the same classifier, and therefore the same classifier scores or $LPR$s as their parents. If we take the number of nodes that share more than half of their data with their parent, this statistic rises to 50\%. A consequence of this redundancy is that the graph is shallower than appears in the figure: for example, the first connected set in the top left of Figure \ref{disease_diag_graph2} appears to have six levels, but actually only has three because the last three levels do not contain any new information.

\subsection{Details of ClusHMC implementation}\label{appendix:clusHMC_details}
We use ClusHMC and follow \cite{dimitrovski2011} by constructing bagged ensembles and used the original settings of \cite{vens2008}, weighting each node equally when assessing distance, i.e. $w_i = 1$ for all $i$. In addition to node weights, the minimum number of instances is set to 5, and the minimum variance reduction is tuned via 5-fold cross validation from the options 0.60, 0.70, 0.80, 0.90, and 0.95. Following the implementation of \cite{lee2013}, a default of 100 PCTs are trained for each ClusHMC ensemble; each PCT is estimated by resampling the training data with replacement and running ClusHMC on the result.

\subsection{Complete results on precision recall curves of the simulation study}\label{appendix:more_results}
Figure \ref{class_sim_PR2} depicts the precision-recall curves for settings 1 through 4 of the normal range --- between 0 and 1.


\begin{figure}[h!]
  \caption{Precision recall curves comparing ClusHMC, HierLPR, and LPR for settings 1 through 4. The recall rates are between 0 and $1$.}
    \label{class_sim_PR2}  
    \includegraphics[width=1\textwidth]{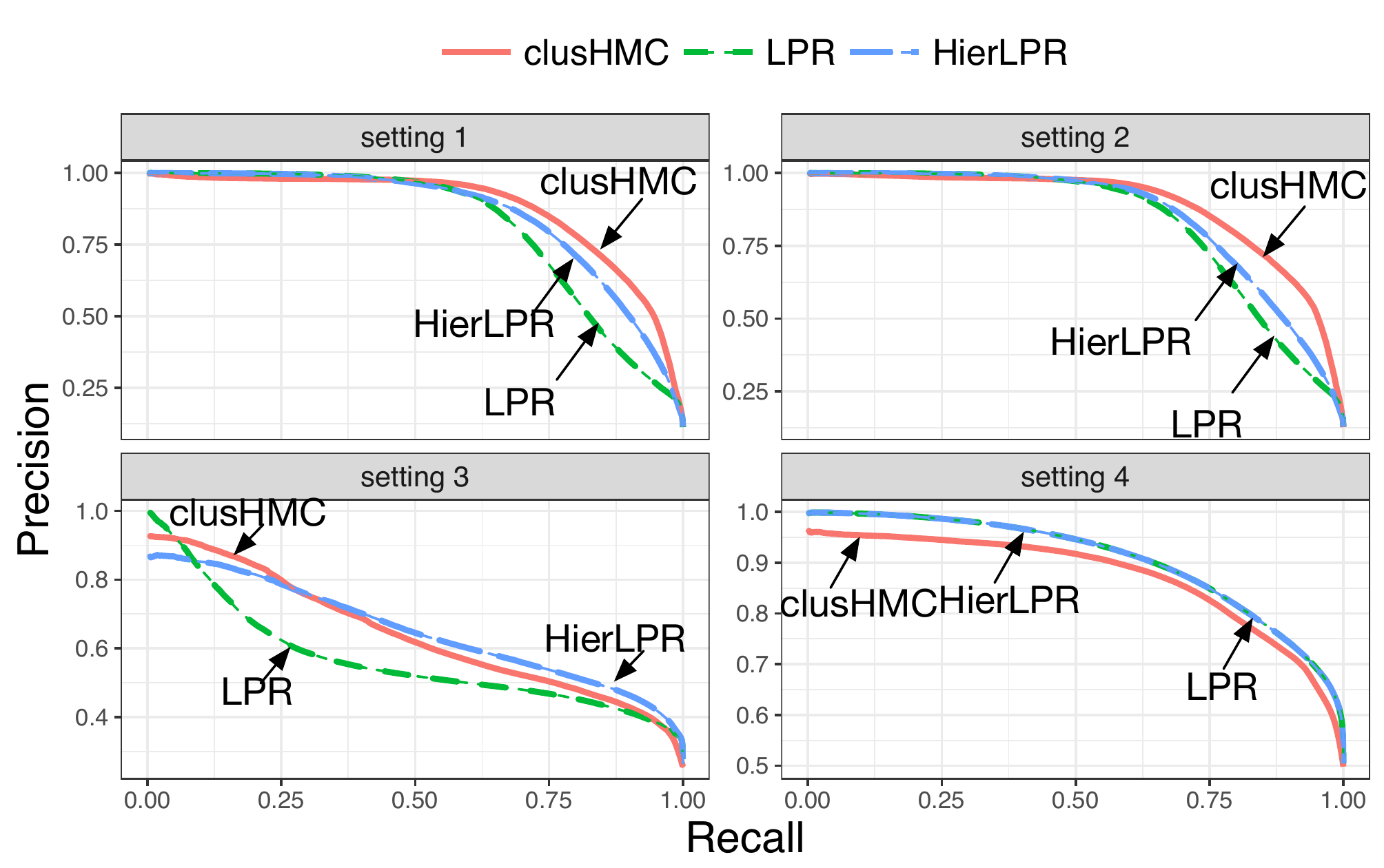}
\end{figure}

\end{appendices}

\clearpage

\bibliography{hierLPR}

\begin{thebibliography}{}

\bibitem[Alves et~al., 2010]{alves2010}
Alves, R.~T., Delgado, M., and Freitas, A.~A. (2010).
\newblock Knowledge discovery with artificial immune systems for hierarchical
  multi-label classification of protein functions.
\newblock In {\em Fuzzy Systems (FUZZ), 2010 IEEE International Conference on},
  pages 1--8. IEEE.

\bibitem[Baraniuk, 1999]{baraniuk1999optimal}
Baraniuk, R.~G. (1999).
\newblock Optimal tree approximation with wavelets.
\newblock In {\em Wavelet Applications in Signal and Image Processing VII},
  volume 3813, pages 196--208. International Society for Optics and Photonics.

\bibitem[Baraniuk et~al., 2010]{baraniuk2010model}
Baraniuk, R.~G., Cevher, V., Duarte, M.~F., and Hegde, C. (2010).
\newblock Model-based compressive sensing.
\newblock {\em IEEE Transactions on Information Theory}, 56(4):1982--2001.

\bibitem[Baraniuk and Jones, 1994]{baraniuk1994}
Baraniuk, R.~G. and Jones, D.~L. (1994).
\newblock A signal-dependent time-frequency representation: Fast algorithm for
  optimal kernel design.
\newblock {\em Signal Processing, IEEE Transactions on}, 42(1):134--146.

\bibitem[Barbedo and Lopes, 2007]{barbedo2007automatic}
Barbedo, J. G.~A. and Lopes, A. (2007).
\newblock Automatic genre classification of musical signals.
\newblock {\em EURASIP Journal on Applied Signal Processing}, 2007(1):157--157.

\bibitem[Barutcuoglu et~al., 2006]{barutcuoglu2006}
Barutcuoglu, Z., Schapire, R.~E., and Troyanskaya, O.~G. (2006).
\newblock Hierarchical multi-label prediction of gene function.
\newblock {\em Bioinformatics}, 22(7):830--836.

\bibitem[Benjamini and Bogomolov, 2014]{benjamini2014}
Benjamini, Y. and Bogomolov, M. (2014).
\newblock Selective inference on multiple families of hypotheses.
\newblock {\em Journal of the Royal Statistical Society: Series B (Statistical
  Methodology)}, 76(1):297--318.

\bibitem[Bi and Kwok, 2011]{bi2011}
Bi, W. and Kwok, J.~T. (2011).
\newblock Multi-label classification on tree-and dag-structured hierarchies.
\newblock In {\em Proceedings of the 28th International Conference on Machine
  Learning (ICML-11)}, pages 17--24.

\bibitem[Bi and Kwok, 2015]{bi2015}
Bi, W. and Kwok, J.~T. (2015).
\newblock Bayes-optimal hierarchical multilabel classification.
\newblock {\em IEEE Transactions on Knowledge and Data Engineering},
  27(11):2907--2918.

\bibitem[Blockeel et~al., 2002]{blockeel2002}
Blockeel, H., Bruynooghe, M., Dzeroski, S., Ramon, J., and Struyf, J. (2002).
\newblock Hierarchical multi-classification.
\newblock In {\em Proceedings of the ACM SIGKDD 2002 workshop on
  multi-relational data mining (MRDM 2002)}, pages 21--35.

\bibitem[Blockeel et~al., 2006]{blockeel2006decision}
Blockeel, H., Schietgat, L., Struyf, J., D{\v{z}}eroski, S., and Clare, A.
  (2006).
\newblock Decision trees for hierarchical multilabel classification: A case
  study in functional genomics.
\newblock In {\em European Conference on Principles of Data Mining and
  Knowledge Discovery}, pages 18--29. Springer.

\bibitem[Cai and Sun, 2009]{cai2009simultaneous}
Cai, T.~T. and Sun, W. (2009).
\newblock Simultaneous testing of grouped hypotheses: Finding needles in
  multiple haystacks.
\newblock {\em Journal of the American Statistical Association},
  104(488):1467--1481.

\bibitem[Cesa-Bianchi et~al., 2006a]{cesa2006hierarchical}
Cesa-Bianchi, N., Gentile, C., and Zaniboni, L. (2006a).
\newblock Hierarchical classification: combining bayes with svm.
\newblock In {\em Proceedings of the 23rd international conference on Machine
  learning}, pages 177--184. ACM.

\bibitem[Cesa-Bianchi et~al., 2006b]{cesa2006incremental}
Cesa-Bianchi, N., Gentile, C., and Zaniboni, L. (2006b).
\newblock Incremental algorithms for hierarchical classification.
\newblock {\em Journal of Machine Learning Research}, 7(Jan):31--54.

\bibitem[Clare, 2003]{clare2003}
Clare, A. (2003).
\newblock {\em Machine learning and data mining for yeast functional genomics}.
\newblock PhD thesis, The University of Wales.

\bibitem[Costa et~al., 2007]{costa2007}
Costa, E., Lorena, A., Carvalho, A., and Freitas, A. (2007).
\newblock A review of performance evaluation measures for hierarchical
  classifiers.
\newblock In {\em Evaluation Methods for Machine Learning II: papers from the
  AAAI-2007 Workshop, AAAI Technical Report WS-07-05}, pages 182--196. AAAI
  Press.

\bibitem[Davis and Goadrich, 2006]{davis2006}
Davis, J. and Goadrich, M. (2006).
\newblock The relationship between precision-recall and roc curves.
\newblock In {\em Proceedings of the 23rd international conference on Machine
  learning}, pages 233--240. ACM.

\bibitem[Dimitrovski et~al., 2011]{dimitrovski2011}
Dimitrovski, I., Kocev, D., Loskovska, S., and D{\v{z}}eroski, S. (2011).
\newblock Hierarchical annotation of medical images.
\newblock {\em Pattern Recognition}, 44(10):2436--2449.

\bibitem[Efron, 2005]{efron2005}
Efron, B. (2005).
\newblock {\em Local false discovery rates}.
\newblock Division of Biostatistics, Stanford University.

\bibitem[Efron, 2007]{efron2007size}
Efron, B. (2007).
\newblock Size, power and false discovery rates.
\newblock {\em The Annals of Statistics}, 35(4):1351--1377.

\bibitem[Efron, 2012]{efron2012}
Efron, B. (2012).
\newblock {\em Large-scale inference: empirical Bayes methods for estimation,
  testing, and prediction}, volume~1.
\newblock Cambridge University Press.

\bibitem[Gauch et~al., 2009]{gauch2009training}
Gauch, S., Chandramouli, A., and Ranganathan, S. (2009).
\newblock Training a hierarchical classifier using inter document
  relationships.
\newblock {\em Journal of the Association for Information Science and
  Technology}, 60(1):47--58.

\bibitem[Hand, 2009]{hand2009}
Hand, D.~J. (2009).
\newblock Measuring classifier performance: a coherent alternative to the area
  under the roc curve.
\newblock {\em Machine learning}, 77(1):103--123.

\bibitem[Herskovic et~al., 2007]{herskovic2007}
Herskovic, J.~R., Iyengar, M.~S., and Bernstam, E.~V. (2007).
\newblock Using hit curves to compare search algorithm performance.
\newblock {\em Journal of biomedical informatics}, 40(2):93--99.

\bibitem[Holden and Freitas, 2005]{holden2005hybrid}
Holden, N. and Freitas, A.~A. (2005).
\newblock A hybrid particle swarm/ant colony algorithm for the classification
  of hierarchical biological data.
\newblock In {\em Swarm Intelligence Symposium, 2005. SIS 2005. Proceedings
  2005 IEEE}, pages 100--107. IEEE.

\bibitem[Huang et~al., 2010]{huang2010}
Huang, H., Liu, C.-C., and Zhou, X.~J. (2010).
\newblock Bayesian approach to transforming public gene expression repositories
  into disease diagnosis databases.
\newblock {\em Proceedings of the National Academy of Sciences},
  107(15):6823--6828.

\bibitem[Jiang et~al., 2014]{jiang2014}
Jiang, C.-R., Liu, C.-C., Zhou, X.~J., and Huang, H. (2014).
\newblock Optimal ranking in multi-label classification using local precision
  rates.
\newblock {\em Statistica Sinica}, 24(4):1547--1570.

\bibitem[Kiritchenko et~al., 2005]{kiritchenko2005}
Kiritchenko, S., Matwin, S., and Famili, F. (2005).
\newblock Functional annotation of genes using hierarchical text
  categorization.
\newblock In {\em BioLINK SIG: Linking Literature, Information and Knowledge
  for Biology (held at ISMB-05), Detroit, USA, 2005}.

\bibitem[Kiritchenko et~al., 2006]{kiritchenko2006learning}
Kiritchenko, S., Matwin, S., Nock, R., and Famili, A.~F. (2006).
\newblock Learning and evaluation in the presence of class hierarchies:
  Application to text categorization.
\newblock In {\em Conference of the Canadian Society for Computational Studies
  of Intelligence}, pages 395--406. Springer.

\bibitem[Koller and Sahami, 1997]{koller1997}
Koller, D. and Sahami, M. (1997).
\newblock Hierarchically classifying documents using very few words.
\newblock In {\em Proceedings of the Fourteenth International Conference on
  Machine Learning}, pages 170--178. Morgan Kaufmann Publishers Inc.

\bibitem[Lee, 2013]{lee2013}
Lee, W.~T. (2013).
\newblock {\em Bayesian Analysis in Problems with High Dimensional Data and
  Complex Dependence Structure}.
\newblock PhD thesis, University of California, Berkeley.

\bibitem[Mayne and Perry, 2009]{mayne2009}
Mayne, A. and Perry, R. (2009).
\newblock Hierarchically classifying documents with multiple labels.
\newblock In {\em Computational Intelligence and Data Mining, 2009. CIDM'09.
  IEEE Symposium on}, pages 133--139. IEEE.

\bibitem[Nowak et~al., 2010]{nowak2010performance}
Nowak, S., Lukashevich, H., Dunker, P., and R{\"u}ger, S. (2010).
\newblock Performance measures for multilabel evaluation: a case study in the
  area of image classification.
\newblock In {\em Proceedings of the international conference on Multimedia
  information retrieval}, pages 35--44. ACM.

\bibitem[Pillai et~al., 2013]{pillai2013}
Pillai, I., Fumera, G., and Roli, F. (2013).
\newblock Threshold optimisation for multi-label classifiers.
\newblock {\em Pattern Recognition}, 46(7):2055--2065.

\bibitem[Rousu et~al., 2006]{rousu2006kernel}
Rousu, J., Saunders, C., Szedmak, S., and Shawe-Taylor, J. (2006).
\newblock Kernel-based learning of hierarchical multilabel classification
  models.
\newblock {\em Journal of Machine Learning Research}, 7(Jul):1601--1626.

\bibitem[Silla and Freitas, 2009]{silla2009novel}
Silla, C.~N. and Freitas, A.~A. (2009).
\newblock Novel top-down approaches for hierarchical classification and their
  application to automatic music genre classification.
\newblock In {\em Systems, Man and Cybernetics, 2009. SMC 2009. IEEE
  International Conference on}, pages 3499--3504. IEEE.

\bibitem[Silla and Freitas, 2011]{silla2011survey}
Silla, C.~N. and Freitas, A.~A. (2011).
\newblock A survey of hierarchical classification across different application
  domains.
\newblock {\em Data Mining and Knowledge Discovery}, 22(1-2):31--72.

\bibitem[Sun and Lim, 2001]{sun2001}
Sun, A. and Lim, E.-P. (2001).
\newblock Hierarchical text classification and evaluation.
\newblock In {\em Data Mining, 2001. ICDM 2001, Proceedings IEEE International
  Conference on}, pages 521--528. IEEE.

\bibitem[Valentini, 2009]{valentini2009}
Valentini, G. (2009).
\newblock True path rule hierarchical ensembles.
\newblock In {\em International Workshop on Multiple Classifier Systems}, pages
  232--241. Springer.

\bibitem[Valentini, 2011]{valentini2011}
Valentini, G. (2011).
\newblock True path rule hierarchical ensembles for genome-wide gene function
  prediction.
\newblock {\em IEEE/ACM Transactions on Computational Biology and
  Bioinformatics}, 8(3):832--847.

\bibitem[Vens et~al., 2008]{vens2008}
Vens, C., Struyf, J., Schietgat, L., D{\v{z}}eroski, S., and Blockeel, H.
  (2008).
\newblock Decision trees for hierarchical multi-label classification.
\newblock {\em Machine Learning}, 73(2):185--214.

\bibitem[Verspoor et~al., 2006]{verspoor2006categorization}
Verspoor, K., Cohn, J., Mniszewski, S., and Joslyn, C. (2006).
\newblock A categorization approach to automated ontological function
  annotation.
\newblock {\em Protein Science}, 15(6):1544--1549.

\bibitem[Weston et~al., 2003]{weston2003kernel}
Weston, J., Chapelle, O., Vapnik, V., Elisseeff, A., and Sch{\"o}lkopf, B.
  (2003).
\newblock Kernel dependency estimation.
\newblock In {\em Advances in neural information processing systems}, pages
  897--904.

\bibitem[Wu et~al., 2005]{wu2005}
Wu, F., Zhang, J., and Honavar, V. (2005).
\newblock Learning classifiers using hierarchically structured class
  taxonomies.
\newblock In {\em Abstraction, Reformulation and Approximation}, pages
  313--320. Springer.

\bibitem[Yekutieli, 2008]{yekutieli2008}
Yekutieli, D. (2008).
\newblock Hierarchical false discovery rate--controlling methodology.
\newblock {\em Journal of the American Statistical Association},
  103(481):309--316.

\bibitem[Yekutieli et~al., 2006]{yekutieli2006}
Yekutieli, D., Reiner-Benaim, A., Benjamini, Y., Elmer, G.~I., Kafkafi, N.,
  Letwin, N.~E., and Lee, N.~H. (2006).
\newblock Approaches to multiplicity issues in complex research in microarray
  analysis.
\newblock {\em Statistica Neerlandica}, 60(4):414--437.

\end{thebibliography}
\bibliographystyle{apalike}

\end{document}